\definecolor{newcolor}{rgb}{.8,.349,.1}
\definecolor{myblue}{rgb}{.294,.616,.863}
\useunder{\uline}{\ul}{}
\newcommand{\comment}[1]{\textit{\color{myblue}\# #1}}
\title{Reliable uncertainty quantification for 2D/3D anatomical landmark localization using multi-output conformal prediction}
\author{ \href{https://orcid.org/0000-0003-3608-0308}{\includegraphics[scale=0.06]{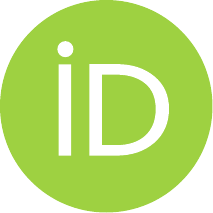}\hspace{1mm}Jef Jonkers}\\
	IDLab \\
        Department of Electronics and Information Systems \\
        Ghent University, Belgium \\
	\texttt{jef.jonkers@ugent.be} \\
    \And
    \href{https://orcid.org/0000-0002-7617-3210}{\includegraphics[scale=0.06]{orcid.pdf}\hspace{1mm}Frank Coopman}\\
	Biometrics Research Group \\ 
        Department of Morphology, Imaging, Orthopedics, \\
        Rehabilitation and Nutrition \\
        Ghent University, Belgium \\
        \texttt{frank.coopman@ugent.be} \\
	\And
    \href{https://orcid.org/0000-0003-0462-3638}{\includegraphics[scale=0.06]{orcid.pdf}\hspace{1mm}Luc Duchateau}\\
	Biometrics Research Group \\ 
        Department of Morphology, Imaging, Orthopedics, \\
        Rehabilitation and Nutrition \\
        Ghent University, Belgium \\
        \texttt{luc.duchateau@ugent.be} \\
	\And
    \href{https://orcid.org/0000-0001-9530-3466}{\includegraphics[scale=0.06]{orcid.pdf}\hspace{1mm}Glenn Van Wallendael}\\
	IDLab \\
        Department of Electronics and Information Systems \\
        Ghent University - imec, Belgium \\
	\texttt{glenn.vanwallendael@ugent.be} \\
	\And
	\href{https://orcid.org/0000-0002-7865-6793}{\includegraphics[scale=0.06]{orcid.pdf}\hspace{1mm}Sofie Van Hoecke}\\
	IDLab \\
        Department of Electronics and Information Systems \\
        Ghent University - imec, Belgium \\
	\texttt{sofie.vanhoecke@ugent.be} \\
}
\begin{document}
\maketitle

\begin{abstract}
	Automatic anatomical landmark localization in medical imaging requires not just accurate predictions but reliable uncertainty quantification for effective clinical decision support. Current uncertainty quantification approaches often fall short, particularly when combined with normality assumptions, systematically underestimating total predictive uncertainty. This paper introduces conformal prediction as a framework for reliable uncertainty quantification in anatomical landmark localization, addressing a critical gap in automatic landmark localization. We present two novel approaches guaranteeing finite-sample validity for multi-output prediction: Multi-output Regression-as-Classification Conformal Prediction (M-R2CCP) and its variant Multi-output Regression to Classification Conformal Prediction set to Region (M-R2C2R). Unlike conventional methods that produce axis-aligned hyperrectangular or ellipsoidal regions, our approaches generate flexible, non-convex prediction regions that better capture the underlying uncertainty structure of landmark predictions. Through extensive empirical evaluation across multiple 2D and 3D datasets, we demonstrate that our methods consistently outperform existing multi-output conformal prediction approaches in both validity and efficiency. This work represents a significant advancement in reliable uncertainty estimation for anatomical landmark localization, providing clinicians with trustworthy confidence measures for their diagnoses. While developed for medical imaging, these methods show promise for broader applications in multi-output regression problems.
\end{abstract}

\keywords{Landmark Localization \and Uncertainty Quantification \and Conformal Prediction \and Multi-output Conformal Prediction \and Prediction Region}

\section{Introduction}
Automatic anatomical landmark localization is crucial in advancing diagnostic capabilities based on medical images. Despite the increasing volume of publications within the biomedical domain addressing this area \citep{payer_integrating_2019, thaler_modeling_2021, ryu_automated_2022, pei_learning-based_2023, de_queiroz_tavares_borges_mesquita_artificial_2023}, the current emphasis on maximizing predictive accuracy neglects uncertainty quantification. Uncertainty quantification is vital for unlocking the full potential of clinical artificial intelligence by providing a more nuanced understanding of model predictions, enhancing diagnostic accuracy and decision support, and subsequently, improving human health \citep{banerji_clinical_2023}.

The central emphasis on predictive accuracy prompts the need to focus on the reliability of machine learning models, especially in clinical settings. Deployed models must not only deliver accurate predictions but also provide reliable uncertainty measures, increasing trust in their outputs and enabling decision support frameworks \citep{holzinger_interactive_2016}.

Existing approaches to uncertainty quantification in anatomical landmark localization often fall short of providing reliable estimates \citep{kumar_uglli_2019, kumar_luvli_2020, drevicky_evaluating_2020, lee_automated_2020, ma_volumetric_2020, payer_uncertainty_2020,thaler_modeling_2021, kwon_multistage_2021, schobs_uncertainty_2023,schobs_bayesian_2023}, i.e., failing to accurately quantify the degree of confidence in predictions or measurements, which is a fundamental characteristic of any uncertainty quantification method. Approaches often also fail to address the total predictive uncertainty \citep{kumar_uglli_2019, kumar_luvli_2020, ma_volumetric_2020, lee_automated_2020, kwon_multistage_2021, mccouat_contour-hugging_2022, schobs_uncertainty_2023}, i.e., encompassing all sources of variability and error. Consequently, there is a need to explore alternative methodologies that offer robust uncertainty estimates in medical image analysis.

We propose introducing conformal prediction to anatomical landmark localization tasks, a methodology capable of generating prediction regions with finite sample coverage guarantees. Besides evaluating several existing multi-output conformal prediction approaches for the landmark localization task, this work also introduces two novel methods, i.e., Multi-output Regression to Classification conformal prediction set to Region (M-R2C2R) and Multi-output Regression-as-Classification Conformal Prediction (M-R2CCP), an extension of the R2CCP \citep{guha_conformal_2023} approach to multi-output regression. These approaches successfully overcome key challenges in landmark localization and in multi-output regression problems in general, including handling multi-dimensional outputs effectively and managing varying response domains across different examples.

The remainder of the paper is structured as follows. Section \ref{sec:related-work} provides a comprehensive review of existing uncertainty quantification approaches for landmark localization, highlighting their limitations in accounting for different sources of uncertainty and the lack of reliability guarantees in current methods. Section \ref{sec:uq-general} describes different uncertainty quantification approaches for landmark localization and introduces our novel multi-output conformal prediction approaches.  Section \ref{sec:datasets} discusses the experimental setup and datasets, followed by Section \ref{sec:experiments}, which discusses and presents the results. Finally, Section \ref{sec:conclusion} concludes the work and outlines directions for future research.

\section{Related work}
\label{sec:related-work}
Below, we briefly discuss different landmark localization approaches and the related work to uncertainty quantification for this problem.

\subsection{Landmark localization}
Landmark localization in medical imaging has evolved significantly with the emergence of deep learning approaches. Mainly, approaches have been based on coordinate and heatmap regression~\citep{jonkers2025landmarkertoolkitanatomicallandmark}. While coordinate regression directly predicts landmark coordinates, heatmap regression has gained prominence since its introduction by \citet{tompson_joint_2014}, who demonstrated superior performance by predicting likelihood maps for each landmark. However, heatmap-based models generally perform worse regarding inference time and memory consumption than coordinate regression approaches~\citep{fard_acr_2022}. 

Recent advances in heatmap regression have led to several methodological variations. Direct heatmap regression methods can be categorized into static and adaptive approaches~\citep{jonkers2025landmarkertoolkitanatomicallandmark}. Static approaches use fixed hyperparameters for heatmap distribution during training, while adaptive methods dynamically adjust these parameters. Some works have explored treating heatmap parameters as learnable model parameters \citep{payer_integrating_2019, payer_uncertainty_2020, thaler_modeling_2021}, while others have implemented scheduling methods that modify parameters based on evaluation metrics \citep{teixeira_adaloss_2019}.

The spatial configuration network \citep{payer_integrating_2019} has emerged as a particularly effective architecture for medical imaging applications where anatomical structures maintain consistent spatial relationships, such as cephalograms or pelvis radiographs. This approach integrates spatial configuration information into the heatmap regression framework, improving localization accuracy for positionally consistent medical images.

Another approach that allows for a better representation of predictive uncertainty is the contour-hugging method \citep{mccouat_contour-hugging_2022}, which transforms landmark localization into a classification task using one-hot encoded heatmaps.

Fully convolutional neural networks with differentiable decoding operations, such as the soft-argmax operation \citep{luvizon_2d3d_2018, dong_supervision-by-registration_2018, bulat_subpixel_2021}, generate heatmaps as an intermediate layer and optimize using coordinate-based loss. Some approaches further account for annotation ambiguity through uncertainty-aware loss functions \citep{kumar_uglli_2019, kumar_luvli_2020, zhou_star_2023}, enhancing the robustness of landmark predictions.

Two-stage approaches have proven particularly effective in the medical domain \citep{jiang_cephalformer_2022, song_automatic_2020, zhong_attention-guided_2019}. These methods first localize landmarks on low-resolution images before refining predictions on high-resolution patches centered around initial estimates. These approaches have been successfully applied across various medical imaging tasks, including cephalometric landmark localization \citep{song_automatic_2020}.

\subsection{Uncertainty quantification}
\label{sec:related-work-uq}
As stated before, the literature on uncertainty quantification for landmark localization is limited. The available work (see Table \ref{tab:overview-works}) lacks reliability guarantees or does not properly account for all sources of uncertainty, both aleatoric (data-related) and epistemic (model-related).

\subsubsection{Aleatoric and epistemic uncertainty}
\paragraph{Aleatoric uncertainty} Data-related or aleatoric uncertainty refers to the irreducible part of the uncertainty due to the stochastic dependency between the input instances and outcomes or the lack of explanatory depth, i.e., there is a missing explanatory variable that could make the outcomes entirely separable. Thus, there is no aleatoric uncertainty for a prediction problem with a deterministic dependency between the input object and the outcome. However, in the landmark localization task, aleatoric uncertainty is present due to annotation errors and the ambiguity of specific landmarks. As a result, medical diagnoses that rely on detecting landmarks show high inter-rater and intra-rater variability \citep{kamoen_clinical_2001}, resulting in possible misdiagnosis. It is important to stress that machine learning alone cannot entirely remove this variability. However, it can reduce human errors, which have nothing to do with the ambiguity of the landmarks and can still quantify the uncertainty in a diagnosis, which clinicians can consider. Quantifying this aleatoric uncertainty can give insights into the reasons for misdiagnosis and indicate how to revise current medical diagnosis procedures. 

Aleatoric uncertainty can, for example \citep{hullermeier_aleatoric_2021}, be modeled by performing maximum likelihood inference, for instance, by training a neural network that, next to trying to estimate the target value based on the input value, also tries to predict the heteroscedastic aleatoric uncertainty \citep{kendall_what_2017}. Typically, the Gaussian negative log-likelihood is used as a loss function that also tries to capture the data-related uncertainty; however, other loss functions exist \citep{kendall_what_2017}. Note that using a specific loss function introduces an inductive bias to the aleatoric uncertainty.

\paragraph{Epistemic uncertainty} Model and approximation uncertainty are regarded as epistemic uncertainty, meaning uncertainty due to a lack of knowledge and can, in principle, be reduced by increasing the amount of training samples or considering more appropriate models \citep{hullermeier_aleatoric_2021}. 

Model uncertainty relates to the uncertainty regarding the choice of model hypothesis space, i.e., uncertainty due to the model selection, and is often neglected \citep{hullermeier_aleatoric_2021}. The argument for ignoring model uncertainty is that in the case of neural networks, the assumption is made that their capacity is large enough that the suitable model is somewhere in this hypothesis space. However, architectural choices and regularization always drastically shrink this hypothesis space. Methods such as MC dropout can never account for this and hence cannot capture this source of uncertainty, as we demonstrate in our results.

Approximation epistemic uncertainty relates to the quality and the amount of data and the possibility of suboptimal paths followed by the learning algorithm \citep{hullermeier_aleatoric_2021}. Bayesian neural networks (BNN) and approximation variants, such as Monte Carlo (MC) dropout, quantify this approximation epistemic uncertainty. These can, in principle, also account for aleatoric uncertainty \citep{gal_dropout_2016, valdenegro-toro_deeper_2022}, however more in a frequentist way, and this is unrightfully often neglected.



\paragraph{Total predictive uncertainty}
In this work, we will focus on the total predictive uncertainty of a prediction, considering both aleatoric and epistemic uncertainty. The reasons for this choice are two-fold. First, considering uncertainty estimation in clinical machine learning applications where a diagnosis or prognosis is predicted, we believe that the total predictive uncertainty is of primary concern, i.e., a clinician is primarily interested in how much the prediction will deviate from the true outcome. Where this uncertainty comes from, due to model misspecification or inherent variability, is of second importance. Second, while a large part of the literature focuses on one of the specific sources of uncertainty or tries to disentangle these sources, it is essential to make clear that these uncertainties are also interrelated~\citep{valdenegro-toro_deeper_2022, banerji_clinical_2023}, i.e., the aleatoric part has an effect on the epistemic part, and the estimation of the aleatoric uncertainty contains epistemic uncertainty.

It is worth noting that while the focus here is on estimating the total predictive uncertainty, the underlying methodologies used (e.g., density estimation and ensemble modeling) could also be leveraged to quantify the aleatoric and epistemic uncertainty components if desired separately. This additional level of granularity may be helpful in specific applications where understanding the relative contributions of these uncertainty sources could provide additional insights~\citep{banerji_clinical_2023}. However, these would not give the finite sample guarantees of the conformal prediction framework.

\subsubsection{Uncertainty quantification in landmark localization}
Table \ref{tab:overview-works} provides an overview of the current literature on uncertainty quantification in landmark localization, highlighting the key distinctions and identifying existing gaps. A more elaborate literature review can be found in \ref{ap:uq_landmark}.

\definecolor{headerBg}{RGB}{245,245,245}        
\definecolor{alternateBg}{RGB}{252,252,252}     
\definecolor{tableBorder}{RGB}{220,220,220}     

\newcommand{\checkicon}{\raisebox{-0.3\height}{\includegraphics[height=14pt]{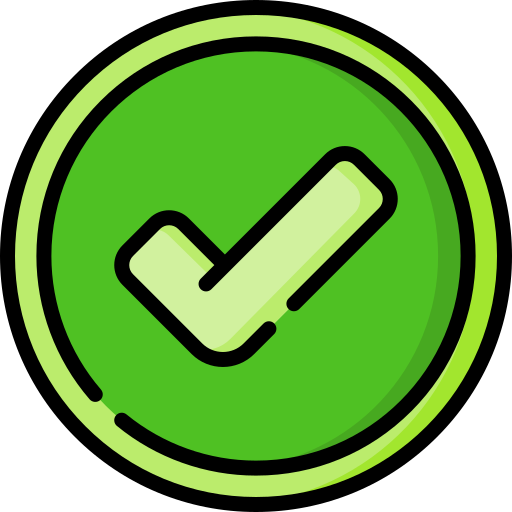}}}
\newcommand{\crossicon}{\raisebox{-0.3\height}{\includegraphics[height=14pt]{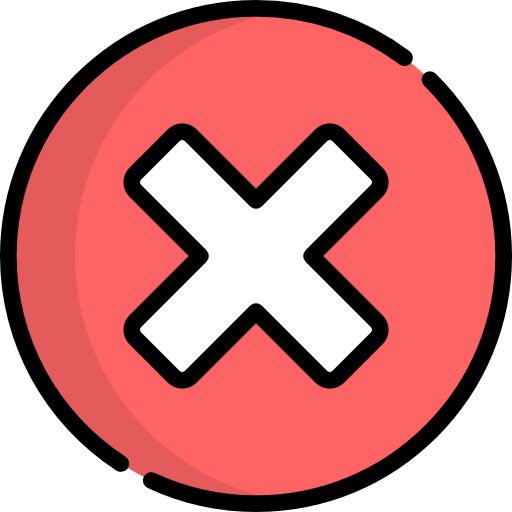}}}
\newcommand{\dashicon}{\raisebox{-0.3\height}{\includegraphics[height=14pt]{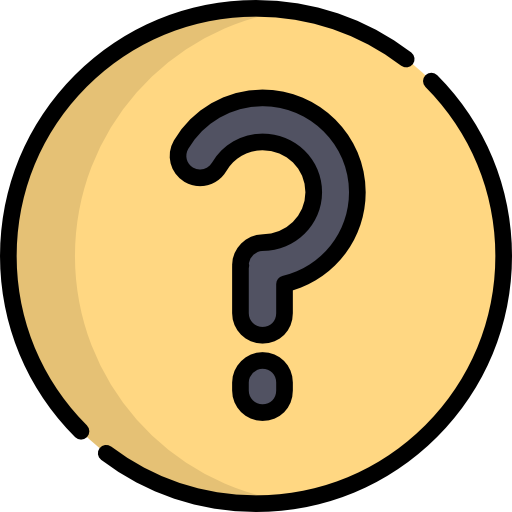}}}

\newcommand{\ccell}[1]{\multicolumn{1}{c}{\makecell[c]{#1}}}

\begin{table*}[!htbp]
\setlength{\aboverulesep}{0pt}
\setlength{\belowrulesep}{0pt}
\setlength{\tabcolsep}{6pt}  
\renewcommand{\arraystretch}{1.3}  
\centering

\caption{Overview of related works on uncertainty quantification for anatomical landmark localization.}
\label{tab:overview-works}
\begin{adjustbox}{width=1\textwidth}
\begin{tabular}{
    >{\raggedright\arraybackslash}m{2.5cm}
    >{\raggedright\arraybackslash}m{2.7cm}
    >{\raggedright\arraybackslash}m{1.8cm}
    ccccc
    c
}
\toprule[\heavyrulewidth]
\rowcolor{headerBg}
\multicolumn{1}{c}{} & 
\multicolumn{1}{c}{} & 
\multicolumn{1}{c}{} & 
\multicolumn{1}{c}{} & 
\multicolumn{2}{c}{\textbf{Epistemic}} & 
\multicolumn{3}{c}{} \\

\rowcolor{headerBg}
\multicolumn{1}{c}{\textbf{Work}} & 
\multicolumn{1}{c}{\textbf{Approach}} & 
\multicolumn{1}{c}{\makecell[c]{\textbf{Uncertainty}\\\textbf{repres.}}} & 
\multicolumn{1}{c}{\textbf{Aleatoric}} & 
\multicolumn{1}{c}{\textbf{Model}} & 
\multicolumn{1}{c}{\textbf{Approx.}} & 
\multicolumn{1}{c}{\makecell[c]{\textbf{Validity}\\\textbf{eval.}}} & 
\multicolumn{1}{c}{\makecell[c]{\textbf{Adaptivity}\\\textbf{eval.}}} & 
\multicolumn{1}{c}{\textbf{Dim.}} \\

\midrule[\heavyrulewidth]

\rowcolor{white}
\cite{lee_automated_2020} &
MC-dropout &
Variance, region &
\crossicon &
\crossicon &
\checkicon &
\crossicon &
\crossicon &
2D \\

\rowcolor{alternateBg}
\cite{drevicky_evaluating_2020} &
MC-dropout, ensemble, heatmap derived &
Variance &
\crossicon &
\crossicon &
\checkicon &
\crossicon &
\crossicon &
2D \\

\rowcolor{white}
\cite{jafari_u-land_2022} &
MC-dropout, variational inference &
Variance &
\checkicon &
\crossicon &
\checkicon &
\crossicon &
\crossicon &
2D \\

\rowcolor{alternateBg}
\cite{ma_volumetric_2020} &
TTA &
Region &
\checkicon &
\crossicon &
\crossicon &
\crossicon &
\checkicon &
3D \\

\rowcolor{white}
\cite{schobs_uncertainty_2023} &
Ensemble, heatmap derived &
Variance &
\dashicon &
\dashicon &
\dashicon &
\checkicon &
\checkicon &
2D \\

\rowcolor{alternateBg}
\cite{schobs_bayesian_2023} &
Gaussian process &
Covariance &
\checkicon &
\crossicon &
\checkicon &
\crossicon &
\crossicon &
2D \\

\rowcolor{white}
\cite{kwon_multistage_2021} &
Heatmap derived &
Covariance &
\checkicon &
\crossicon &
\crossicon &
\crossicon &
\crossicon &
2D \\

\rowcolor{alternateBg}
\cite{payer_uncertainty_2020, thaler_modeling_2021} &
Heatmap derived &
Covariance, entropy &
\dashicon &
\dashicon &
\dashicon &
\crossicon &
\crossicon &
2D/3D \\

\rowcolor{white}
\cite{kumar_uglli_2019, kumar_luvli_2020} &
MLE &
Covariance &
\checkicon &
\crossicon &
\crossicon &
\crossicon &
\checkicon &
2D \\

\rowcolor{alternateBg}
\cite{mccouat_contour-hugging_2022} &
MLE, heatmap derived &
Expected radial error &
\checkicon &
\crossicon &
\crossicon &
\checkicon &
\checkicon &
2D \\

\bottomrule[\heavyrulewidth]
\end{tabular}
\end{adjustbox}
\vspace{1em}
\begin{minipage}{\textwidth}
\small
\hspace{1.5em} \checkicon~Yes \hspace{1.5em} \crossicon~No \hspace{1.5em} \dashicon~Not Clear
\end{minipage}

\end{table*}

Current works exhibit varied approaches to uncertainty quantification in landmark localization, such as MC dropout \citep{lee_automated_2020, drevicky_evaluating_2020, jafari_u-land_2022}, Gaussian processes \citep{schobs_bayesian_2023}, ensembles \citep{drevicky_evaluating_2020, schobs_uncertainty_2023}, test-time augmentation (TTA) \citep{ma_volumetric_2020}, maximum likelihood estimation (MLE) \citep{kumar_uglli_2019, avisdris_fetal_2021, mccouat_contour-hugging_2022}, or inferring uncertainty from the predicted heatmaps \citep{drevicky_evaluating_2020, payer_uncertainty_2020, thaler_modeling_2021, kwon_multistage_2021, mccouat_contour-hugging_2022, schobs_uncertainty_2023}. However, all of these works fail to reliably account for the total predictive uncertainty. Some only consider aleatoric uncertainty by performing maximum likelihood estimation \citep{kumar_uglli_2019, avisdris_fetal_2021, mccouat_contour-hugging_2022}. Others only try to model the approximation uncertainty and fail to consider aleatoric and model uncertainty \citep{drevicky_evaluating_2020, lee_automated_2020, schobs_uncertainty_2023}. Moreover, significant number of works also fail to evaluate the validity of their proposed approaches and uncertainty measures \citep{kumar_uglli_2019, kumar_luvli_2020,drevicky_evaluating_2020, lee_automated_2020, ma_volumetric_2020, payer_uncertainty_2020, thaler_modeling_2021, kwon_multistage_2021, jafari_u-land_2022, schobs_bayesian_2023}. 
Most existing methods are alo restricted to 2D landmark localization despite the growing use of 3D imaging as a standard in clinical practice. This shift from 2D to 3D imaging highlights the urgent need for uncertainty quantification techniques targeted to anatomical landmarks in 3D images. The existing literature should address this development more prominently, as only \citet{ma_volumetric_2020, payer_uncertainty_2020, thaler_modeling_2021} addresses 3D landmark localization.

This work tries to resolve the shortcomings by producing prediction regions with validity guarantees for a specified confidence level, ensuring reliable uncertainty estimates for 2D and 3D landmark predictions. This is achieved by introducing the conformal prediction framework for anatomical landmark detection to account for all sources of uncertainty and to give a notion of validity. We will also empirically validate the reliability/validity of these regions and those produced by other approaches.

\section{Uncertainty quantification for landmark localization}
\label{sec:uq-general}
\subsection{Notation and problem setup}
In this work, we denote our training data sequence as $(X_i, Y_i) = Z_i \in \mathcal{Z} = \mathcal{X} \times \mathcal{Y}$, where $X_i$ represents the input image from the input space $\mathcal{X}$. Where $\mathcal{X} \subseteq \mathbb{R}^{H \times W \times C}$ corresponds to 2D images of height $H$, width $W$, and $C$ channels, or $\mathcal{X} \subseteq \mathbb{R}^{D \times H \times W \times C}$ for 3D volumetric images with depth $D$. The response variable $Y_i$ represents a single anatomical landmark from the response space $\mathcal{Y}$ and $\mathcal{Y} \subseteq \mathbb{R}^d$, where $d \in {2,3}$ denotes the input image dimension. $\hat{C}_{\alpha}(X_{n+1};Z_{1:n})$ represents the estimated prediction interval with a targeted confidence level $1-\alpha$ for the image $X_{n+1}$ based on the previous examples $Z_{1:n} := Z_1, ..., Z_n$. For conciseness, we will often drop the conditioning on the previous examples in the prediction interval notation, i.e., $\hat{C}_{\alpha}(X_{n+1})$.

\subsection{Conformal prediction}
\label{sec:uq-general-cp}
Conformal prediction \citep{vovk_machine-learning_1999, saunders_transduction_1999, papadopoulos_inductive_2002, vovk_algorithmic_2022} is a framework that quantifies predictive uncertainty by outputting a prediction set with a related confidence level $1-\alpha$, which comes with a finite-sample coverage guarantee,
\begin{equation}
    \label{eq:marginal-guarantee}
    \mathbb{P}_{Z \sim P_{Z}}\left(Y_{n+1} \in \hat{C}_{\alpha}\left(X_{n+1};Z_{1:n}\right)\right) >= 1 - \alpha,
\end{equation}
under distribution-free assumptions, only requiring exchangeability of the training observations $Z_1, ..., Z_n$ and test object $Z_{n+1}$. The prediction sets in conformal prediction are formed by comparing nonconformity scores of examples to see how unusual a predicted label is. These scores measure the disagreement between the prediction and the actual target. In regression problems, the prediction sets are often referred to as prediction intervals in the case of a univariate response and prediction regions in the multivariate case, such as landmark localization.

One of the main advantages of conformal prediction is its model-agnostic nature, i.e., it can be applied to almost every machine learning model, as well as neural networks and anatomical landmark localization algorithms. However, the initial proposed approach, transductive (full) conformal prediction (TCP) \citep{saunders_transduction_1999}, is computationally expensive for most algorithms since it requires a refitting of the model for every possible label in the response domain. For regression problems, one refits the model for several grid points spanning the domain to make it somewhat computationally feasible, but still, for computationally expensive fitting procedures such as neural networks, this approach is practically infeasible. Therefore, we consider inductive (split) conformal prediction (ICP) \citep{papadopoulos_inductive_2002}, which is less computationally intensive than TCP. This decreased computational burden does not come for free because ICP cannot use the calibration set to optimize the point predictor, unlike TCP, which is likely more efficient. However, ICP allows conformal prediction to use neural networks. We provide the general procedure of ICP in Algorithm \ref{alg:general_ICP}.

\begin{algorithm}[H]
\caption{General Procedure of ICP}
\label{alg:general_ICP}
\begin{algorithmic}[1]
    \STATE {\bfseries Assumption:} Exchangeability of calibration set $Z_{n-m+1:n}$ and test example $Z_{n+1}$.
    \STATE {\bfseries Input:} Training data sequence $Z_{1:n}$, test object $X_{n+1}$, landmark localization model $\mathcal{L}$, and size of the calibration set $m$.
    \STATE The data sequence $Z_{1:n}$ is split into a proper training data set $Z_{1:n-m}$ and a calibration set $Z_{n-m+1:n}$.
    \STATE The proper training dataset $Z_{1:n-m}$ is used to train a landmark localization model $\mathcal{L}$.
    \STATE For each example $i$ in the calibration set $Z_{n-m+1:n}$, calculate the nonconformity score $S_{i}=s(X_i,Y_i)$, which relies on the landmark localization model.
    \STATE The nonconformity scores from the calibration set are sorted in descending order: $S^*_{1},..., S^*_{m}$.
    \STATE For each new test object $X_{n+1}$ and confidence level $1 - \alpha$, we return the prediction region 
    \begin{equation}
        \hat{C}_{\alpha}(X_{n+1}) = \left\{ y \in \mathcal{Y}: s(X_{n+1}, y) \leq S^*_{\lfloor \alpha (m+1) \rfloor} \right\}
    \end{equation}
\end{algorithmic}
\end{algorithm}

Validity and efficiency are the two main criteria to evaluate a confidence predictor \citep{fontana_conformal_2022}; they are also sometimes referred to as reliability and sharpness. Confidence predictions are always conservatively valid in conformal prediction and can be exactly valid using a smoothed conformal predictor \citep{vovk_algorithmic_2022, fontana_conformal_2022}. Since validity is always satisfied in conformal methods, efficiency should be optimized, i.e., the uncertainty related to predictions should be minimized. However, efficiency should never be optimized at the expense of validity. \citet{fontana_conformal_2022} phrase it quite clearly that \textit{"Validity is the priority: without it, the meaning of predictive regions is lost, and it becomes easy to achieve the best possible performance"}. This is the primary problem with the current state-of-the-art uncertainty quantification for landmark localization. Few of these works that discuss and propose methods for uncertainty quantification, evaluate the reliability of their methods (see Table \ref{tab:overview-works}), while that should be the starting point. For regression problems, efficiency in conformal prediction is often assessed with the width of the resulting prediction intervals, which we want to minimize to increase efficiency. However, in our landmark localization problem, the target has two or three dimensions. Thus, we also require multi-dimensional prediction regions, where the area or volume of the prediction region evaluates efficiency.

\subsubsection{Beyond marginal coverage}
Conformal prediction traditionally provides calibrated prediction intervals with marginal coverage. However, we often desire conditional coverage, which guarantees coverage across different data strata. However, unfortunately, true conditional coverage,
\begin{equation}
    \mathbb{P}_{Z \sim P_{Z}} \left( Y_{n+1} \in \hat{C}_{\alpha} \left(X_{n+1};Z_{1:n} \right) | X_{n+1} \right) >= 1 - \alpha,
\end{equation}
is impossible without making modeling assumptions or returning trivial intervals \citep{foygel_barber_limits_2021}. Several approaches have emerged to address this limitation and create more adaptive prediction intervals, guaranteeing something between finite sample marginal and true conditional coverage or giving asymptotic conditional guarantees.

Mondrian conformal prediction \citep{vovk_mondrian_2003} offers an intuitive solution by partitioning the example space using a measurable function that assigns each data point to a specific category. By applying the general conformal prediction framework to each category, one can obtain coverage guarantees conditional on these categories. However, this approach reduces the calibration set size, potentially increasing variance around the target coverage level.

A more sophisticated approach leverages nonconformity scores that incorporate conditional uncertainty estimates. Instead of using classical absolute residual errors, these methods use distributional predictions or statistics like quantiles or moments to create more adaptive prediction intervals. One can leverage the conditional estimate to obtain more conditionally relevant prediction intervals while still controlling the marginal coverage through the conformal prediction framework. This would make marginally valid prediction intervals more adaptive to the local variability of prediction error. A large region should indicate a high point prediction error, and a small region should indicate a low prediction error.

Several works have been proposed following this idea. Normalized nonconformity scores, proposed by \citet{papadopoulos_reliable_2011}, combine absolute error with a reciprocal uncertainty estimate. The uncertainty can be derived from separate models, MC dropout variance, or MLE. In regression problems, generally, these normalized scores consist of the product of the absolute error $|Y_i-\hat{f}(X_i)|$ and the reciprocal of a 1D uncertainty estimate $\hat{u}(X_i)$, $S_i = \frac{|Y_i-\hat{f}(X_i)|}{\hat{u}(X_i)}$.
\citet{romano_conformalized_2019} introduced a novel nonconformity score using conditional upper and lower quantile estimates. \citet{lei_efficient_2011, lei_distribution-free_2013} proposed to leverage the conditional density estimation $\hat{f}_{X_i}$, e.g., kernel density estimation, to construct the nonconformity score $S_i=-\hat{f}_{X_i}(Y_i)$ to create more adaptive prediction intervals. Similarly, \citet{chernozhukov_distributional_2021} proposed distributional conformal prediction, leveraging the probability integral transform in nonconformity score. \citet{sesia_conformal_2021} also leveraged the conditional distribution by using a histogram binning approach and nested prediction sets \citep{gupta_nested_2022}, allowing to get more efficient intervals for heavily skewed distributions. 

A recent innovative approach is the Regression to Classification Conformal Prediction (R2CCP) method proposed by \citet{guha_conformal_2023}. This technique transforms the regression problem into a classification task by binning the response domain and estimating a discrete probability distribution. The discrete distribution is then interpolated to create a continuous conditional distribution score, which is used to construct prediction intervals. The idea behind this approach is that it results in more stable training and is better at learning conditional expectations \citep{stewart_regression_2023, guha_conformal_2023}. Specifically, they propose to transform the response domain into $K$ bins covering the entire response domain $\mathcal{Y}$, resulting in a described label space $\hat{\mathcal{Y}} = \{\hat{Y}_1,..., \hat{Y}_K\}$, where $\hat{Y}_k$ represent the midpoints of the $k$th bin. They then fit a classifier with $K$ classes for estimating a discrete probability distribution $\hat{p}(\cdot|X_i)$. This distribution is transformed to continuous conditional distribution score $\hat{f}_{X_i}(y)$ by taking the linear interpolation of the discrete probability estimates,
\begin{equation}
    \hat{f}_{X_i}(y) = \gamma_k \hat{p}\left( \hat{Y}_{k}|X_i \right) + \left( 1-\gamma_k \right) \hat{p}\left(\hat{Y}_{k+1}|X_i \right), \quad \text{where } \hat{Y}_{k} \leq y < \hat{Y}_{k+1} \text{ and } \gamma_k = \frac{Y_{k+1}-y}{Y_{k+1}-Y_k}.
\end{equation}
This estimated continuous conditional distribution is then leveraged in the same manner as in \citet{lei_efficient_2011, lei_distribution-free_2013}, using the nonconformity scores $S_i=-\hat{f}_{X_i}(Y_i)$.

\subsubsection{Multi-output conformal prediction}
Most existing conformal prediction approaches primarily address univariate output regression problems despite landmark localization being inherently a multi-output regression task. Multi-output conformal prediction has emerged as a growing research area that provides valid prediction regions for multiple target variables.

Since the pioneering work by \cite{lei_distribution-free_2013}, researchers have proposed various approaches to extend single-output conformal prediction techniques. These approaches focus on critical aspects such as validity guarantees, efficiency, adaptivity, and computational feasibility. Our work categorizes multi-output conformal prediction approaches into five classes: statistically corrected, maximum nonconformity, copula-based, ellipsoidal, and density-based conformal prediction. A comprehensive review of these approaches is available in \ref{ap:cp_multi_output}.

This work discusses and benchmarks all these approaches except copula-based methods. Copula-based approaches require two calibration sets when employing split conformal prediction \citep{sun_copula_2024}, which can significantly reduce prediction efficiency. Furthermore, these methods produce hyperrectangular prediction regions aligned with the axes of the output dimensions, which may be suboptimal for coordinate predictions like landmark localization  (see Fig. \ref{fig:pred-region-2D} and \ref{fig:pred-region-3D}). Such regions tend to overcompensate for uncertainty, reducing the prediction regions' efficiency.

In this section, we review statistically corrected (Fig. \ref{fig:pred-region-2D} and \ref{fig:pred-region-3D}, top-left), maximum nonconformity (top-middle), and ellipsoidal (top-right) conformal prediction, focusing on their application to landmark localization. We also introduce two novel density-based approaches for multi-output regression in Section \ref{sec:uq-general-cp-mrcp}. The first is a method termed Multi-output Regression-as-Classification Conformal Prediction (M-R2CCP), which extends the R2CCP framework proposed by \citet{guha_conformal_2023}. Second, we introduce Multi-output Regression to Classification conformal prediction set to Region (M-R2C2R). This approach allows us to leverage conformal prediction approaches for classification problems, such as Adaptive Prediction Sets (APS) \citep{romano_classification_2020}. APS is a conformal prediction approach that specifically addresses the challenge of constructing prediction sets for classification problems that adapt to the difficulty of each instance, ensuring better efficiency (smaller sets) while maintaining coverage. As shown in the bottom panels of Fig. \ref{fig:pred-region-2D} and \ref{fig:pred-region-3D}, our approaches are way more flexible in the prediction regions they can represent and are, therefore, often more efficient.

\begin{figure}
    \centering
    \includegraphics[width=\textwidth]{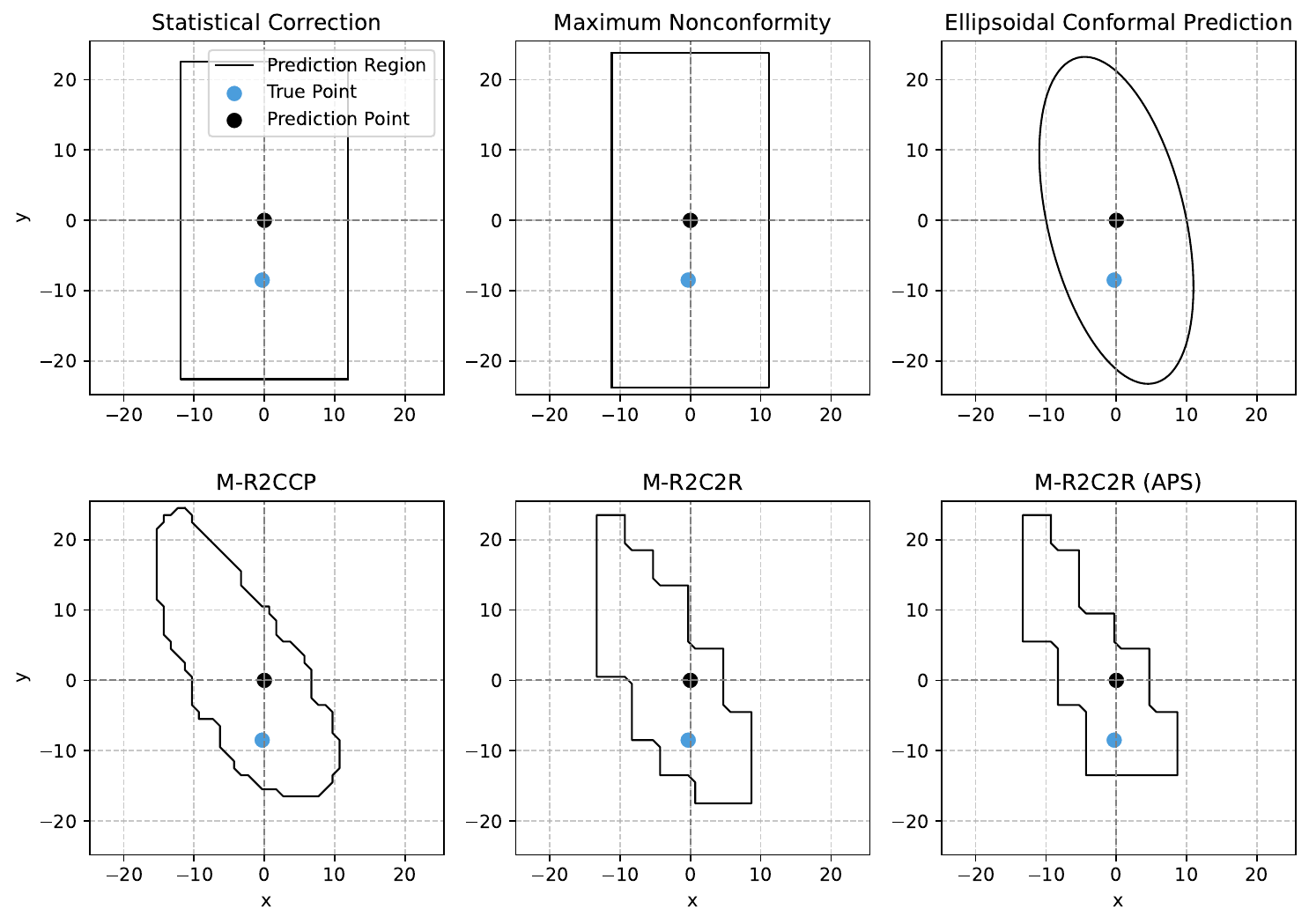}
    \caption{Examples of prediction region produced by different multi-output conformal prediction approaches (2D).}
    \label{fig:pred-region-2D}
\end{figure}

\begin{figure}
    \centering
    \includegraphics[width=\textwidth]{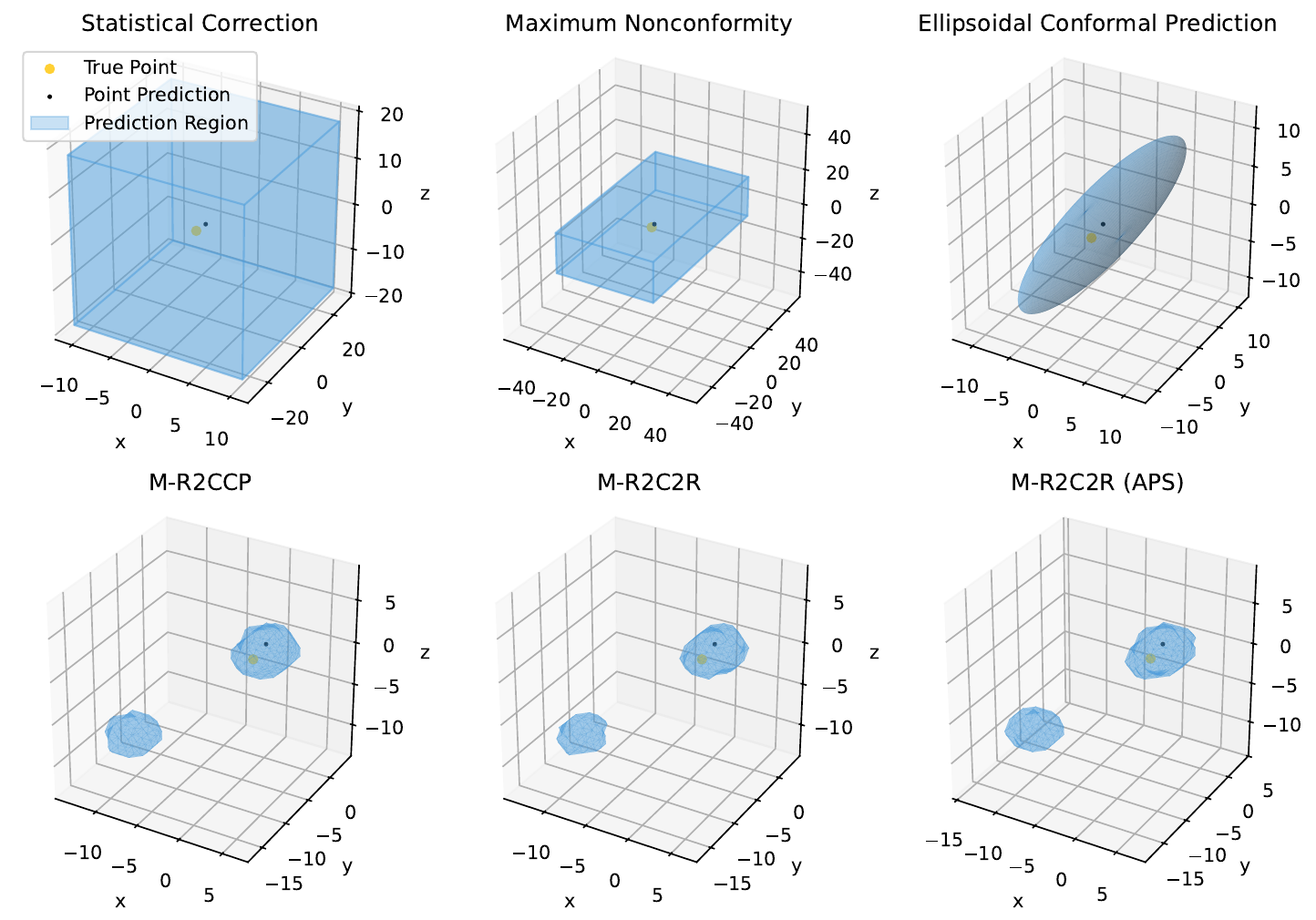}
    \caption{Examples of prediction region produced by different multi-output conformal prediction approaches (3D).}
    \label{fig:pred-region-3D}
\end{figure}

\paragraph{Statistical correction} In statistically corrected multi-output conformal prediction, predictions are initially generated separately for each output dimension. Global coverage across dimensions is achieved by applying dimension-wise corrections; see Algorithm \ref{alg:corrected_ICP}. 

While previous work \citep{messoudi_conformal_2020} used the Šidák correction, setting individual significance levels to $\alpha_t=1-\sqrt[d]{1-\alpha}$, which provides exact validity for independent scores but becomes conservative with positively dependent nonconformity scores and can result in invalid prediction regions when the scores are negatively dependent. Therefore, we opt for the Bonferroni correction $\alpha_t=1-\frac{\alpha}{d}$ for benchmarking purposes \citep{dunn_multiple_1961}. This correction comes with the marginal coverage guarantee (Equation \ref{eq:marginal-guarantee}) without any assumption about the relation of the nonconformity scores across different dimensions, see Proposition \ref{prop:bonferroni-ICP}. Note that one can use any nonconformity score; we use normalized scores that estimate uncertainty through variance, resulting in hyperrectangular prediction regions aligned with the axes (see Fig. \ref{fig:pred-region-2D} and \ref{fig:pred-region-3D}). We get the variance estimate of uncertainty through one of the approaches discussed in Section \ref{sec:other-approaches}.

\begin{algorithm}[H]
\caption{Statistically Corrected ICP}
\label{alg:corrected_ICP}
\begin{algorithmic}[1]
    \STATE {\bfseries Assumption:} Exchangeability of calibration set $Z_{n-m+1:n}$ and test example $Z_{n+1}$.
    \STATE {\bfseries Input:} Training data sequence $Z_{1:n}$, test object $X_{n+1}$, landmark localization model $\mathcal{L}$, size of the calibration set $m$, and output dimension $d$.
    \STATE The data sequence $Z_{1:n}$ is split into a proper training data set $Z_{1:n-m}$ and a calibration set $Z_{n-m+1:n}$.
    \STATE The proper training dataset $Z_{1:n-m}$ is used to train a landmark localization model $\mathcal{L}$.
    \FOR{$j \gets 1$ to $d$}
        \STATE For each example $i$ in the calibration set $Z_{n-m+1:n}$, calculate the nonconformity score $S_{i,j}=s(X_i,Y_{i,j})$ for dimension $j$, which relies on the landmark localization model.
        \STATE The nonconformity scores from the calibration set are sorted in descending order: $S^*_{1,j},..., S^*_{m,j}$
    \ENDFOR
    \STATE For each new test sample $X_{n+1}$ and confidence level $1 - \alpha$, we return the prediction region
    \begin{equation}
        \hat{C}_{\alpha}\left(X_{n+1}\right) = \left\{ y \in \mathcal{Y}: s\left(X_{n+1}, y_j \right) \leq S^*_{\lfloor \alpha_t (m+1) \rfloor, j} \text{ for } j = 1, \ldots, d \right\}
    \end{equation}
    where $\alpha_t$ is the adjusted significance level based on $\alpha$.
\end{algorithmic}
\end{algorithm}

\begin{restatable}{proposition}{bonferroniProp}
\label{prop:bonferroni-ICP}
Suppose that $Z_{1:n+1}$ are exchangeable, and $\alpha_t=\frac{\alpha}{d}$ (Bonferroni correction). Then, the prediction region defined by Algorithm \ref{alg:corrected_ICP} satisfies the marginal coverage guarantee in Equation \ref{eq:marginal-guarantee}.
\end{restatable}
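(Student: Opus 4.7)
My plan is to reduce the statement to a $d$-fold application of the standard single-output inductive conformal prediction coverage guarantee, combined with a union bound.

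First, I would fix a dimension $j \in \{1,\dots,d\}$ and consider only the per-dimension data $(X_i, Y_{i,j})$ for $i = n-m+1,\dots,n+1$. Since $Z_{1:n+1}$ is exchangeable by assumption, so is this collection (projections of exchangeable tuples are exchangeable). The nonconformity score $s(X_i, Y_{i,j})$ is a fixed measurable function of $(X_i, Y_{i,j})$, so the scores $S_{i,j}$ for $i$ in the calibration set together with the test score $S_{n+1,j}$ are exchangeable random variables. Applying the standard ICP guarantee (Algorithm \ref{alg:general_ICP} at level $\alpha_t$) to dimension $j$ then yields
\begin{equation*}
\mathbb{P}\bigl(s(X_{n+1}, Y_{n+1,j}) > S^*_{\lfloor \alpha_t (m+1) \rfloor, j}\bigr) \leq \alpha_t.
\end{equation*}

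Next, I would express the event that the test point falls outside $\hat{C}_\alpha(X_{n+1})$ as a union over dimensions. By the definition of the prediction region in Algorithm \ref{alg:corrected_ICP},
\begin{equation*}
\{Y_{n+1} \notin \hat{C}_\alpha(X_{n+1})\} = \bigcup_{j=1}^{d}\bigl\{s(X_{n+1}, Y_{n+1,j}) > S^*_{\lfloor \alpha_t (m+1) \rfloor, j}\bigr\}.
\end{equation*}
A union bound together with $\alpha_t = \alpha/d$ then gives
\begin{equation*}
\mathbb{P}(Y_{n+1} \notin \hat{C}_\alpha(X_{n+1})) \leq \sum_{j=1}^{d} \alpha_t = d \cdot \frac{\alpha}{d} = \alpha,
\end{equation*}
which rearranges to Equation \ref{eq:marginal-guarantee}.

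The main subtlety rather than obstacle is verifying that joint exchangeability descends to exchangeability of the per-dimension nonconformity scores so that the single-output ICP guarantee can be invoked marginally in each coordinate; once this is noted, the proof is essentially a textbook Bonferroni argument and carries no dependence assumption between the dimensions of the score vector, which is precisely why Bonferroni is chosen over Šidák in the statement.
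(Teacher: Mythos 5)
Your proposal is correct and follows essentially the same route as the paper: per-dimension ICP validity at level $\alpha_t$ (the paper cites Proposition 4.1 of \citet{vovk_algorithmic_2022}) combined with Boole's inequality over the $d$ miscoverage events, yielding $\sum_{j=1}^d \alpha_t = \alpha$. Your explicit remark that joint exchangeability of $Z_{1:n+1}$ descends to exchangeability of the per-dimension scores is a welcome bit of added rigor that the paper leaves implicit, but it does not change the argument.
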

\begin{proof}
    See \ref{ap:proofs}.
\end{proof}

\paragraph{Maximum nonconformity score}
The maximum nonconformity approach does not change anything to Algorithm \ref{alg:general_ICP}. It only uses a specific family of nonconformity scores,
\begin{equation}
    S_i := \max\left(S_{i,1}, \ldots,S_{i,d}\right),
\end{equation}
which allows for dealing with multi-output response values. The nonconformity score $S_{i,j}$ for example $i$ and dimension $j$, can be any nonconformity score used for single-output regression problems, such as the absolute error \citep{diquigiovanni_conformal_2022}, normalized absolute error \citep{dietterich_conformal_2022, diquigiovanni_conformal_2022}, or CQR \citep{dietterich_conformal_2022}. This work will use the normalized absolute error, like for the Bonferroni correction. Similarly, as for the Bonferroni correction, this approach will result in hyper-rectangle prediction regions aligned with the axes (see Fig. \ref{fig:pred-region-2D} and \ref{fig:pred-region-3D}).

\paragraph{Ellipsoidal conformal prediction}
Approaches \citep{johnstone_conformal_2021, messoudi_ellipsoidal_2022, henderson_adaptive_2024} using Mahalanobis-based nonconformity score enable multi-output response handling without changing Algorithm \ref{alg:general_ICP} by calculating scores as:
\begin{equation}
    S_i = \sqrt{\left(Y_i-\hat{f}(X_i) \right)^T \hat{\Sigma}^{-1}_i \left(Y_i-\hat{f}(X_i) \right)}.
\end{equation}
These methods produce ellipsoidal prediction regions that can be oriented in any direction based on the estimated covariance matrix. Uncertainty estimates can be derived from heatmaps, MC dropout, deep ensembles, or test-time augmentations (see Section \ref{sec:other-approaches}). Note that this can be seen as an extension of the normalized absolute residual scores to multivariate response domain.

\subsubsection{Multi-output regression-as-classification conformal prediction}
\label{sec:uq-general-cp-mrcp}
The existing single-output Regression-as-Classification Conformal Prediction (R2CCP) approach proposed by \citet{guha_conformal_2023} presents limitations when applied to multi-output problems, such as anatomical landmark localization. This work extends the R2CCP framework to address three key challenges:
\begin{enumerate}[leftmargin=2cm]
    \item[Challenge 1:] Handling multi-output response scenarios.
    \item[Challenge 2:] Allow leveraging conformal prediction approaches for the classification setting (e.g., adaptive prediction sets (APS) \citep{romano_classification_2020}) for regression problems.
    \item[Challenge 3:] Managing varying response domains across different examples (e.g., different image resolutions).
\end{enumerate}
To resolve these challenges, we introduce two key methodological advances. The first challenge leads to the development of Multi-output Regression to Classification Conformal Prediction (M-R2CCP). The solutions to the second and third challenges are formally established in Propositions \ref{prop:discrete2continuous} and \ref{prop:transformed-guarantee}, respectively. Combining these solutions results in Multi-output Regression to Classification conformal prediction set to Region (M-R2C2R).

\paragraph{Challenge 1: Addressing multi-output responses} We propose binning the response space into equally sized hyperrectangular regions to handle multi-output responses. While pixel-level binning is conceptually straightforward for medical images, it quickly becomes computationally intractable. Instead, we recommend creating bins that encompass multiple pixels or voxels. Like the univariate approach, we fit a classifier with $K$ classes for estimating a discrete probability distribution $\hat{p}(\cdot|X_i)$. We then transform this discrete distribution into a continuous conditional distribution score $\hat{f}_{X_i}(y)$ using linear interpolation:
\begin{equation}
\label{eq:n-interpolation}
\hat{f}_{X_i}(y) = \sum_{(k_1,\ldots,k_d) \in \mathcal{I}} \left[ \prod_{j=1}^{d} \gamma_{k_j,j} \right] \hat{p}\left(\hat{Y}_{(k_1,\ldots,k_d)}|X_i\right),
\end{equation}
where $\mathcal{I}$ is the set of all interpolation indices, $\gamma_{k_j,j} = \frac{Y_{k_j+1,j} - y_j}{Y_{k_j+1,j} - Y_{k_j,j}}$ for $\hat{Y}_{k_j,j} \leq y_j < \hat{Y}_{k_j+1,j}$. In Equation \ref{eq:n-interpolation}, the summation covers all hypercube vertices involved in the n-linear interpolation and $\prod_{j=1}^{d} \gamma_{k_j,j}$ computes the interpolation weight for each vertex. The nonconformity scores are computed as $S_i=-\hat{f}_{X_i}(Y_i)$. Algorithm \ref{alg:M-R2CCP-ICP} provides a more detailed implementation of the Multi-output Regression to Classification Conformal Prediction (M-R2CCP). 
While various interpolation methods can transform discrete bins into continuous responses, the specific approach may impact interval efficiency. Future research could systematically investigate the performance of different interpolation techniques.

One minor problem with (M-)R2CCP is it requires an evaluation of the entire target domain to construct the prediction interval (Algorithm \ref{alg:M-R2CCP-ICP}, line 9). This is often resolved by performing a grid search evaluation on the target domain.

\begin{algorithm}[H]
\caption{Multi-output Regression to Classification Conformal Prediction (M-R2CCP).}
\label{alg:M-R2CCP-ICP}
\begin{algorithmic}[1]
    \STATE {\bfseries Assumption:} Exchangeability of calibration set $Z_{n-m+1:n}$ and test example $Z_{n+1}$.
    \STATE {\bfseries Input:} Training data sequence $Z_{1:n}$, test object $X_{n+1}$, landmark localization model $\mathcal{L}$, size of the calibration set $m$, and output dimension $d$.
    \STATE {\bfseries Hyperparameters:} number of bins $K_j > 1$ for each dimension $j$
    \STATE For each dimension $j$ of the output space $\mathcal{Y}$, discretize the output space dimension into $K_j$ equidistant bins with midpoints $\{\hat{Y}_{1,j}, \ldots, \hat{Y}_{K_j,j}\}$
    \STATE The data sequence $Z_{1:n}$ is split into a proper training data set $Z_{1:n-m}$ and a calibration set $Z_{n-m+1:n}$.
    \STATE Find a discrete probability distribution $\hat{p}(\cdot|X_i)$ by optimizing the landmark localization model $\mathcal{L}$ on the proper training dataset $Z_{1:n-m}$. \comment{For example, by using the one-hot heatmap approach.}
    \STATE For each example $i$ in the calibration set $Z_{n-m+1:n}$, calculate the nonconformity score $S_{i}= -\hat{f}_{X_i}(Y_i)$ (see Equation \ref{eq:n-interpolation}).
    \STATE The nonconformity scores from the calibration set are sorted in descending order: $S^*_{1},..., S^*_{m}$.
    \STATE For each new test sample $X_{n+1}$ and confidence level $1 - \alpha$, we return the prediction region 
    \begin{equation}
        \hat{C}_{\alpha}(X_{n+1}) = \left\{ y \in \mathcal{Y}: -\hat{f}_{X_{n+1}}(Y_{n+1}) \leq S^*_{\lfloor \alpha (m+1) \rfloor}\right\}
    \end{equation}
\end{algorithmic}
\end{algorithm}

\paragraph{Challenge 2: Classification conformal prediction for regression} 
Rather than treating the multi-output regression problem through traditional grid search methods, we propose leveraging classification conformal prediction approaches. This strategy allows us to leverage advanced classification conformal prediction techniques that generate more adaptive prediction sets.

Our Multi-output Regression to Classification conformal prediction set to Region (M-R2C2R) approach transforms the regression problem into a classification task. By generating prediction sets through classification conformal prediction and converting these discrete sets to continuous prediction regions, we can utilize sophisticated methods like adaptive prediction sets (APS) \citep{romano_classification_2020}.

Classification conformal prediction approaches offer several advantages for regression problems. They can potentially produce more example-driven prediction sets that adapt more flexibly to the underlying data distribution; in some cases, we can also reduce the computation burden of the M-R2CCP gird search. 

The M-R2C2R approach outlined in Algorithm \ref{alg:M-R2C2R} and its validity of the generated prediction regions is guaranteed by Propositions \ref{prop:discrete2continuous}. The transformation of the prediction region in M-R2C2R is handled next. 

\begin{restatable}{proposition}{dtocProp}
\label{prop:discrete2continuous}
    Suppose a transformation $g: \mathcal{Y}  \rightarrow \mathcal{P(Y')}$, where $g(y)$ maps each $y \in \mathcal{Y}$ to a set in $\mathcal{Y'}$, and assuming we have a marginal coverage guarantee on $Y_{n+1}  \subseteq \mathcal{Y}$, then the marginal coverage guarantee holds under the transformation of the response and prediction set,
    \begin{equation}
        \label{eq:dtocProp-marginal-guarantee}
        \mathbb{P}_{Z \sim P_{Z}}\left(g(Y_{n+1}) \in g\left(\hat{C}_{\alpha}(X_{n+1};Z_{1:n})\right)\right) >= 1 - \alpha,
    \end{equation}
    provided that the transformed prediction set is constructed as
    \begin{equation} 
        g\left(\hat{C}_{\alpha}(X_{n+1};Z_{1:n})\right) = \bigcup_{y \in \hat{C}_{\alpha}(X_{n+1}; Z_{1:n})} g(y).
    \end{equation}
\end{restatable}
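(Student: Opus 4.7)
The plan is to reduce the transformed coverage claim directly to the assumed marginal coverage guarantee via a simple monotonicity argument on sets. The definition
$$g\bigl(\hat{C}_{\alpha}(X_{n+1};Z_{1:n})\bigr) = \bigcup_{y \in \hat{C}_{\alpha}(X_{n+1}; Z_{1:n})} g(y)$$
means that whenever a point $y$ belongs to $\hat{C}_\alpha$, its image $g(y)$ is literally one of the summands of the union defining $g(\hat{C}_\alpha)$, hence $g(y) \subseteq g(\hat{C}_\alpha)$. No probabilistic machinery beyond this observation and the already-assumed coverage in Equation \ref{eq:marginal-guarantee} is required.

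First I would fix a realization of $(Z_{1:n},X_{n+1},Y_{n+1})$ for which $Y_{n+1} \in \hat{C}_\alpha(X_{n+1};Z_{1:n})$ and note that, by the union definition above, $g(Y_{n+1}) \subseteq g(\hat{C}_\alpha(X_{n+1};Z_{1:n}))$. Interpreting the statement $g(Y_{n+1}) \in g(\hat{C}_\alpha)$ in Equation \ref{eq:dtocProp-marginal-guarantee} as this set-containment of the image inside the transformed region (the natural lifting of $\in$ when $g$ is set-valued), this yields the event inclusion
$$\{Y_{n+1} \in \hat{C}_\alpha(X_{n+1};Z_{1:n})\} \subseteq \{g(Y_{n+1}) \in g(\hat{C}_\alpha(X_{n+1};Z_{1:n}))\}.$$
Taking $\mathbb{P}$ of both sides and invoking the hypothesized marginal coverage on the right-hand event then gives
$$\mathbb{P}\bigl(g(Y_{n+1}) \in g(\hat{C}_\alpha(X_{n+1};Z_{1:n}))\bigr) \geq \mathbb{P}\bigl(Y_{n+1} \in \hat{C}_\alpha(X_{n+1};Z_{1:n})\bigr) \geq 1-\alpha,$$
which is the desired conclusion.

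The main (and essentially only) subtlety is fixing the interpretation of $\in$ on the left-hand side: since $g(y)$ is itself a subset of $\mathcal{Y}'$ rather than a point, the symbol must be read as containment of the image set inside the union, which is precisely the reading under which the union-based definition of $g(\hat{C}_\alpha)$ makes the event inclusion tautological. In the intended classification-to-regression application, $\hat{C}_\alpha$ is a finite set of bin labels and each $g(y)$ is the associated hyperrectangular bin in $\mathcal{Y}'$, so no measurability concerns arise and the argument is purely set-theoretic on top of the already-established classification coverage guarantee inherited from Algorithm \ref{alg:general_ICP}.
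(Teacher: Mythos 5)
Your proof is correct and follows essentially the same route as the paper's: both reduce the claim to the event inclusion $\{Y_{n+1} \in \hat{C}_{\alpha}\} \subseteq \{g(Y_{n+1}) \text{ covered by } g(\hat{C}_{\alpha})\}$, which is immediate from the union construction, and then apply monotonicity of probability together with the assumed marginal guarantee. The only difference is cosmetic: you read the set-valued membership as containment $g(Y_{n+1}) \subseteq g(\hat{C}_{\alpha})$ while the paper reads it as nonempty intersection $g(Y_{n+1}) \cap g(\hat{C}_{\alpha}) \neq \emptyset$; your reading is the (slightly stronger) one and both make the event inclusion tautological.
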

\begin{proof}
    See \ref{ap:proofs}.
\end{proof}

\paragraph{Challenge 3: Addressing varying response domains} We propose a domain transformation approach that standardizes input and response dimensions. We prove that we maintain marginal coverage guarantees by converting all examples to a fixed dimension (e.g., $512 \times 512$), computing nonconformity scores, and subsequently rescaling prediction regions. This method seamlessly integrates with existing anatomical landmark localization pipelines, which typically perform inference on rescaled images. The procedure still comes with the marginal coverage guarantee since it is a monotonic transformation of the response domain; see Proposition \ref{prop:transformed-guarantee}. Note that it can also be applied to previously described conformal prediction approaches.

\begin{restatable}{proposition}{transGuarProp}
\label{prop:transformed-guarantee}
    Suppose the transformation of the response domain is denoted by $g: \mathcal{Y}  \rightarrow \mathcal{Y}'$, and assuming we have a marginal coverage guarantee (see Equation \ref{eq:marginal-guarantee}) on $Y_{n+1} \subseteq \mathcal{Y}$, then the marginal coverage guarantee holds under the transformation of the response and prediction set,
    \begin{equation}
        \label{eq:transGuarProp-marginal-guarantee}
        \mathbb{P}_{Z \sim P_{Z}}\left(g(Y_{n+1}) \in g\left(\hat{C}_{\alpha}(X_{n+1};Z_{1:n})\right)\right) >= 1 - \alpha,
    \end{equation}
    when $g$ is a monotonic OR invertible transformation.
\end{restatable}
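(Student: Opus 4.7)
The plan is to reduce the transformed-scale claim to the assumed marginal guarantee on the original scale by exhibiting an inclusion of events and then applying monotonicity of the probability measure. The key observation is a purely set-theoretic one: for any function $g:\mathcal{Y}\to\mathcal{Y}'$, any set $A\subseteq\mathcal{Y}$, and any point $y\in\mathcal{Y}$, the implication $y\in A \Rightarrow g(y)\in g(A)$ holds by the very definition of the forward image $g(A):=\{g(z):z\in A\}$. This is the same lever used to establish Proposition \ref{prop:discrete2continuous}, and the overall skeleton of the argument will mirror it.

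First, I would fix $X_{n+1}$ and $Z_{1:n}$ and apply the set-theoretic fact with $A=\hat{C}_{\alpha}(X_{n+1};Z_{1:n})$ and $y=Y_{n+1}$ to obtain the event inclusion
$$\{Y_{n+1}\in \hat{C}_{\alpha}(X_{n+1};Z_{1:n})\}\;\subseteq\;\{g(Y_{n+1})\in g(\hat{C}_{\alpha}(X_{n+1};Z_{1:n}))\}.$$
Second, I would take probabilities on both sides and chain with the assumed marginal guarantee from Equation \ref{eq:marginal-guarantee} to conclude
$$\mathbb{P}_{Z\sim P_Z}\!\left(g(Y_{n+1})\in g(\hat{C}_{\alpha}(X_{n+1};Z_{1:n}))\right)\;\geq\;\mathbb{P}_{Z\sim P_Z}\!\left(Y_{n+1}\in \hat{C}_{\alpha}(X_{n+1};Z_{1:n})\right)\;\geq\;1-\alpha,$$
which is exactly Equation \ref{eq:transGuarProp-marginal-guarantee}.

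There is essentially no technical obstacle: the one-sided coverage bound does not require either hypothesis on $g$. I would therefore close by remarking on the role of the two hypotheses. Invertibility of $g$ upgrades the event inclusion to an equality (since $g(y)\in g(A)$ then implies $y\in A$), so the coverage probability is preserved exactly rather than merely lower-bounded, which is relevant when the paper wants to claim that valid calibration at, say, resolution $512\times 512$ transfers back to the native image resolution without loss. Monotonicity, in the coordinatewise sense used in the landmark rescaling pipeline, is what ensures that the transformed region $g(\hat{C}_{\alpha})$ retains the geometric structure (axis-aligned hyperrectangle, ellipsoid, or union of bin cells) assumed by the downstream approaches in Section \ref{sec:uq-general-cp-mrcp}, so that the rescaled region is still usable in practice. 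Neither structural property is needed for the probabilistic inequality itself; they only guarantee that the transformed prediction set remains a well-behaved object.
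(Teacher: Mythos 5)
Your proof is correct, and it takes a subtly but genuinely different route from the paper's. The paper argues that under a monotonic or invertible $g$ the events $\{Y_{n+1}\in\hat{C}_{\alpha}(X_{n+1};Z_{1:n})\}$ and $\{g(Y_{n+1})\in g(\hat{C}_{\alpha}(X_{n+1};Z_{1:n}))\}$ are \emph{equivalent} (inclusion relationships are preserved in both directions, no information is lost), and reads off the coverage guarantee from that equivalence. You instead observe that only the forward implication $y\in A\Rightarrow g(y)\in g(A)$ is needed, that this holds for the forward image of \emph{any} function $g$, and that the one-sided bound then follows from monotonicity of the probability measure chained with Equation \ref{eq:marginal-guarantee}. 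Your argument is therefore more elementary and strictly more general: it shows the stated inequality does not actually require either hypothesis on $g$, and it correctly identifies what those hypotheses buy instead --- injectivity/invertibility upgrades the inclusion to an equality of events so coverage is preserved exactly rather than possibly inflated, and monotonicity keeps the transformed region geometrically usable downstream. The paper's equivalence-based proof buys the stronger two-sided conclusion (no over-coverage introduced by the transformation), which matters for efficiency even though it is not part of the formal claim. Both arguments are sound; yours proves exactly what is stated with weaker assumptions, while the paper's proves slightly more under the stated assumptions.
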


\begin{proof}
     See \ref{ap:proofs}.  
\end{proof}

\begin{algorithm}[H]
\caption{Multi-output Regression to Classification conformal prediction set to Region (M-R2C2R).}
\label{alg:M-R2C2R}
\begin{algorithmic}[1]
    \STATE {\bfseries Assumption:} Exchangeability of calibration set $Z_{n-m+1:n}$ and test example $Z_{n+1}$.
    \STATE {\bfseries Input:} Training data sequence $Z_{1:n}$, test object $X_{n+1}$, landmark localization model $\mathcal{L}$, size of the calibration set $m$, and output dimension $d$.
    \STATE {\bfseries Hyperparameters:} number of bins $K_j > 1$ for each dimension $j$
    \STATE Transform the output space $\mathcal{Y}$ to $\mathcal{Y^*}$. \comment{E.g., transform all images and corresponding coordinates to a $512\times 512$ resolution. (Challenge 3)}
    \STATE For each dimension $j$ of the output space $\mathcal{Y}$, discretize the output space dimension into $K_j$ equidistant bins with midpoints $\{\hat{Y}_{1,j}, \ldots, \hat{Y}_{K_j,j}\}$
    \STATE The data sequence $Z_{1:n}$ is split into a proper training data set $Z_{1:n-m}$ and a calibration set $Z_{n-m+1:n}$.
    \STATE Find a discrete probability distribution $\hat{p}(\cdot|X_i)$ by optimizing the landmark localization model $\mathcal{L}$ on the proper training dataset $Z_{1:n-m}$. \comment{For example, by using the one-hot heatmap approach.}
    \STATE Create a discrete prediction set $C_\alpha^{'}(X_{n+1})$ using $\hat{p}(\cdot|X_i)$, the calibration set  $Z_{n-m+1:n}$, test object $X_{n+1}$, and conformal prediction for categorical labels.
    \STATE Transform the discrete prediction set $C_\alpha^{'}(X_{n+1})$  to the original response domain creating the continuous prediction region $C_\alpha(X_{n+1})$.  \comment{If a bin is present in $C_\alpha^{'}(X_{n+1})$ then all response values corresponding to that bin will be present in $C_\alpha^*(X_{n+1})$.}
    \STATE Transform prediction region $C_\alpha^*(X_{n+1})$ to the original output space $\mathcal{Y}$ resulting in prediction region $C_\alpha(X_{n+1}).$ \comment{(Challenge 3)}
\end{algorithmic}
\end{algorithm}

\subsection{Other approaches}
Several non-conformal techniques dominate the anatomical landmark localization literature (see Table \ref{tab:overview-works}). These approaches serve as benchmarks in our study, and we also incorporate their uncertainty estimates in some of our proposed conformal prediction approaches.

\label{sec:other-approaches}
\subsubsection{Test time augmentation}
Test time augmentation (TTA) randomly samples multiple augmentations of the input image during inference. It performs prediction on these samples, resulting in sample predictions that can be represented by sample statistics or empirical distributions. These transformations can be spatial, such as affine or elastic transformations, or intensity transformations, such as adding Gaussian noise or applying histogram shifts. TTA is also often used to improve accuracy and robustness. 

\citet{wang_aleatoric_2019} provided a mathematical framework for TTA in image segmentation, comparing it with Monte Carlo dropout. While they argue that TTA can quantify image-related aleatoric uncertainty, the approach likely does not capture all sources of aleatoric uncertainty, such as label noise arising from inter- and intra-rater variability. The authors additionally proposed a combined approach integrating TTA and Monte Carlo dropout to quantify the more exhaustive notions of uncertainty, i.e., predictive uncertainty.

\subsubsection{Monte Carlo dropout} Introduced by \citet{gal_dropout_2016}, Monte Carlo (MC) dropout is a method for modeling epistemic uncertainty in deep neural networks by leveraging dropout layers. Typically, dropout is used only during training as a regularization technique. In MC dropout, multiple forward passes are performed through the network with dropout enabled during inference, allowing uncertainty estimation through the variance of these passes. Similar to TTA, it can also be used to improve accuracy and robustness.

\citet{gal_dropout_2016} demonstrated that dropout approximately integrates over model weights and mathematically equates a neural network with dropout to a deep Gaussian process. In our work, we extend this approach by estimating the covariance of response output for our multivariate problem, which can be derived directly from the MC samples.

\subsubsection{Deep ensemble}
\citet{lakshminarayanan_simple_2017} proposed deep ensembles as a method for predictive uncertainty estimation. This approach involves creating an ensemble of neural networks with identical architecture but different weight initializations. Uncertainty is estimated by generating multiple prediction samples and analyzing their distribution.
While primarily addressing epistemic uncertainty in regression contexts, \citet{fort_deep_2020} argued that deep ensembles generally outperform MC dropout due to more decorrelated inference models.

\subsubsection{Temperature scaling} 
Temperature scaling, introduced by \citet{guo_calibration_2017}, is a calibration technique applicable to multi-class problems, specifically useful for one-hot heatmap approaches \citep{mccouat_contour-hugging_2022}. The method introduces a single parameter $\tau$ (temperature) to calibrate heatmap probabilities by dividing each channel pixel before softmax activation.
Optimal $\tau$ is determined using a calibration set to minimize the negative log-likelihood of the one-hot encoded heatmap.

\subsubsection{Heatmap derived uncertainty}
Research by \citet{payer_uncertainty_2020, thaler_modeling_2021} has explored uncertainty derivation by fitting multivariate Gaussian distributions to predicted heatmaps using least-squares curve fitting. In our work, we derive anatomical landmarks and their corresponding covariance matrix by calculating the weighted sample mean and covariance. 

In this work we also introduce another approach, the Naive Regression-to-Classification-to-Regression (Naive R2C2R) approach. Naive R2C2R uses the heatmap as a probability distribution. It works similarly as the APS approach \citep{romano_classification_2020}; however, it does so without an adjustment to control coverage and thus consequently assumes an "oracle" distribution, i.e., a perfectly calibrated distribution, and thus does not come with any coverage guarantees. In practice, it ranks pixel locations in descending order based on predicted probabilities and then progressively adds these pixels to the prediction regions until the cumulative probability of that region meets the desired confidence level. This method can also be enhanced with temperature scaling and/or pixel averaging through deep ensembles, MC dropout, or TTA.

\section{Datasets}
\label{sec:datasets}
\subsection{ISBI 2015 cephalometric grand challenge}
The ISBI 2015 Automatic Cephalometric X-Ray Landmark Detection Challenge selected 19 landmarks commonly used in clinical practice for evaluating the ability of automatic detection of these landmarks \citep{wang_evaluation_2015}. These assessments/landmarks suffer from relatively high intra- and inter-observer errors \citep{kamoen_clinical_2001}. The dataset consists of 400 annotated radiographs; we use the same splitting procedure as in previous works \citep{lindner_fully_2016, zhong_attention-guided_2019, thaler_modeling_2021}, i.e., splitting the data into four equally sized folds, and only use the annotations of the junior annotator. This follows previous works and mitigates a systemic shift between folds annotations \citep{lindner_fully_2016, zhong2019attention, thaler_modeling_2021}. Two folds are used as the proper training set, one as a calibration set and one as a test set to evaluate the proposed approaches.


\subsection{Canine hip dysplasia (CHD) landmark dataset}
A second dataset used to evaluate the proposed approach in this work is a private dataset consisting of annotated canine pelvis X-rays where each radiograph has $12$ annotated landmarks (see Fig. \ref{fig:canine-cp-example}), which we refer to as the Canine Hip Dysplasia (CHD) dataset. These landmarks are used to derive radiograph measurements, which in turn are used to assess the development of hip dysplasia, one of dogs' most prevalent diseases. The dataset consists of $472$ radiographs, of which $310$ are part of the training set and $162$ of the test set. The $310$ radiographs are split into a proper training set with $207$ radiographs and a calibration set with $103$ radiographs.

\subsection{Mandibular molar landmarking dataset}
Finally, we also use the Mandibular Molar Landmarking (MML) dataset \citep{he_anchor_2024}, which consists of 658 annotated CT volumes of the skull. The CT volumes are transformed to a uniform scale of $512\times 512 \times 256$. Multiple junior and senior clinicians annotated the CT volumes. It includes $14$ mandibular landmarks, specifically, $4$ crowns, $8$ roots of the second and third mandibular molars, and $2$ cups of cuspids. Note that the number of landmarks in a single CT volume is arbitrary, i.e., some landmarks are absent in the images. Since this work only focuses on landmark localization, we filter out all CT volumes where not all landmarks are present, reducing the dataset size to $399$. For evaluating uncertainty quantification approaches, we split the data into a proper training dataset of $199$ images, and a calibration and test set of each $100$ images.
For benchmarking the landmark localization model against the state-of-the-art, we use the evaluation procedure proposed by \citet{he_anchor_2024}. Although we propose to correct their deterministic evaluation procedure. The authors did not correct the pixel spacing for the scaling that is possibly performed during their center crop transformation. The pixel spacing should be dived by the applied scaling to get a correct point error in metric units (e.g., mm). This adjustment gives a slightly higher point error for the evaluation dataset, as seen in Tables \ref{tab:mml-det-benchmark} and \ref{tab:mml-det-adj-benchmark} presenting deterministic performance, where the only difference is the scale adjustment.

\section{Experiments}
\label{sec:experiments}
\subsection{Landmark localization}
This work emphasizes model-agnostic uncertainty quantification in landmark localization. However, we employ two distinct approaches to landmark localization to assess the effectiveness of different uncertainty quantification approaches. 

First, we utilize the SpatialConfigurationNet (SCN) model, which incorporates learned homoscedastic aleatoric uncertainty for each landmark, as introduced by \citet{payer_uncertainty_2020, thaler_modeling_2021}. The SCN is used as a landmark localization model across all 2D datasets presented in Section \ref{sec:datasets}. We use the same loss function, hyperparameters, and optimization strategies described in \cite{payer_uncertainty_2020}.

Second, we implement the one-hot approach, introduced in \citet{mccouat_contour-hugging_2022}, which transforms landmark localization into a classification task using one-hot encoded heatmaps. For 2D landmark localization, we propose using a UNet architecture that employs a ResNet-34 encoder pre-trained on ImageNet, with a depth of 5 layers. The decoder features channels of size 256, 128, 64, 32, and 32, respectively. A softmax activation is applied at the output layer to ensure proper probability distribution. This approach is ideal for the M-R2CCP and M-R2C2R approaches proposed in this work.

In this work, we suggest adapting the one-hot technique for the 3D landmark task. The foundational principle remains unchanged: we utilize a 3D UNet that integrates an EfficientNet-B0 encoder with MedicalNet's pre-trained weights, featuring a depth of 5 layers. The decoder comprises of channels sized 256, 128, 64, 32, and 16. A softmax activation is implemented at the output layer to ensure a probability distribution summing to one. This adaptation demonstrates strong performance, surpassing other methods assessed on the MML datasets (see Table \ref{tab:mml-sota}). 

For both methodologies and across all datasets, we apply a data augmentation strategy that includes random spatial transformations (affine transformations) and intensity transformations, such as adding Gaussian noise, scaling, gamma adjustments, and nonlinear transformations of the image histogram.

We also assess how sampling-based uncertainty quantification methods — like TTA, MC dropout, and deep ensembles — enhance predictive performance. Our evaluation consists of two strategies: the first involves averaging the heatmaps at the pixel level, leading to a more robust heatmap that is subsequently decoded into predictions of anatomical landmarks. The second method aligns more with uncertainty quantification approaches, as it decodes the heatmap for each individual sample and averages the landmark predictions across samples. This approach is denoted as LS. Results are presented in Tables \ref{tab:isbi-det}, \ref{tab:chd-det}, and \ref{tab:mml-det-benchmark} in \ref{ap:det-performance}). We observed that pixel-level sampling significantly increases performance across all datasets compared to the baselines and the LS approaches.

\begin{table}[htp]
\centering
\caption{Landmark localization results on the MML (3D) dataset compared against existing approaches. This benchmark dataset is the same as the data
subset, only with complete landmarks, used as a benchmark in \citet{he_anchor_2024}. The best results are highlighted in bold.}
\label{tab:mml-sota}
\resizebox{\textwidth}{!}{
\begin{tabular}{llllll|lllll}
\hline
Model name        & \multicolumn{5}{c|}{Validation}                          & \multicolumn{5}{c}{Test}                        \\ \cline{2-11} 
                  & PE (mm) & \multicolumn{4}{c|}{SDR (\%)}                  & PE (mm) & \multicolumn{4}{c}{SDR (\%)}          \\ \cline{3-6} \cline{8-11} 
                  &         & 2 mm    & 2.5 mm  & 3 mm    & 4 mm             & (mm)    & 2 mm    & 2.5 mm  & 3 mm    & 4 mm    \\ \hline
\begin{tabular}[c]{@{}l@{}}Pruning-ResUNet3D\\ \citep{he_anchor_2024}\end{tabular} & 1.82    & 73.21\% & 82.14\% & 88.93\% & \textbf{94.76\%} & 1.96    & 70.03\% & 79.97\% & 86.10\% & 92.73\% \\ \hline
\textbf{One hot ensemble} &
  \textbf{1.60} &
  \textbf{77.81\%} &
  \textbf{87.76\%} &
  \textbf{89.92\%} &
  93.37\% &
  \textbf{1.39} &
  \textbf{81.67\%} &
  \textbf{91.31\%} &
  \textbf{93.33\%} &
  \textbf{96.31\%} \\ \hline
\end{tabular}
}
\end{table}

\subsection{Empirical results of uncertainty quantification approaches}
We assess the various prediction regions of the uncertainty quantification methods by analyzing three criteria: validity, efficiency, and adaptivity. The evaluation is conducted on the two 2D datasets and the 3D dataset to ensure diversity in our assessments. 

\begin{figure}
    \centering
    \includegraphics[width=\textwidth]{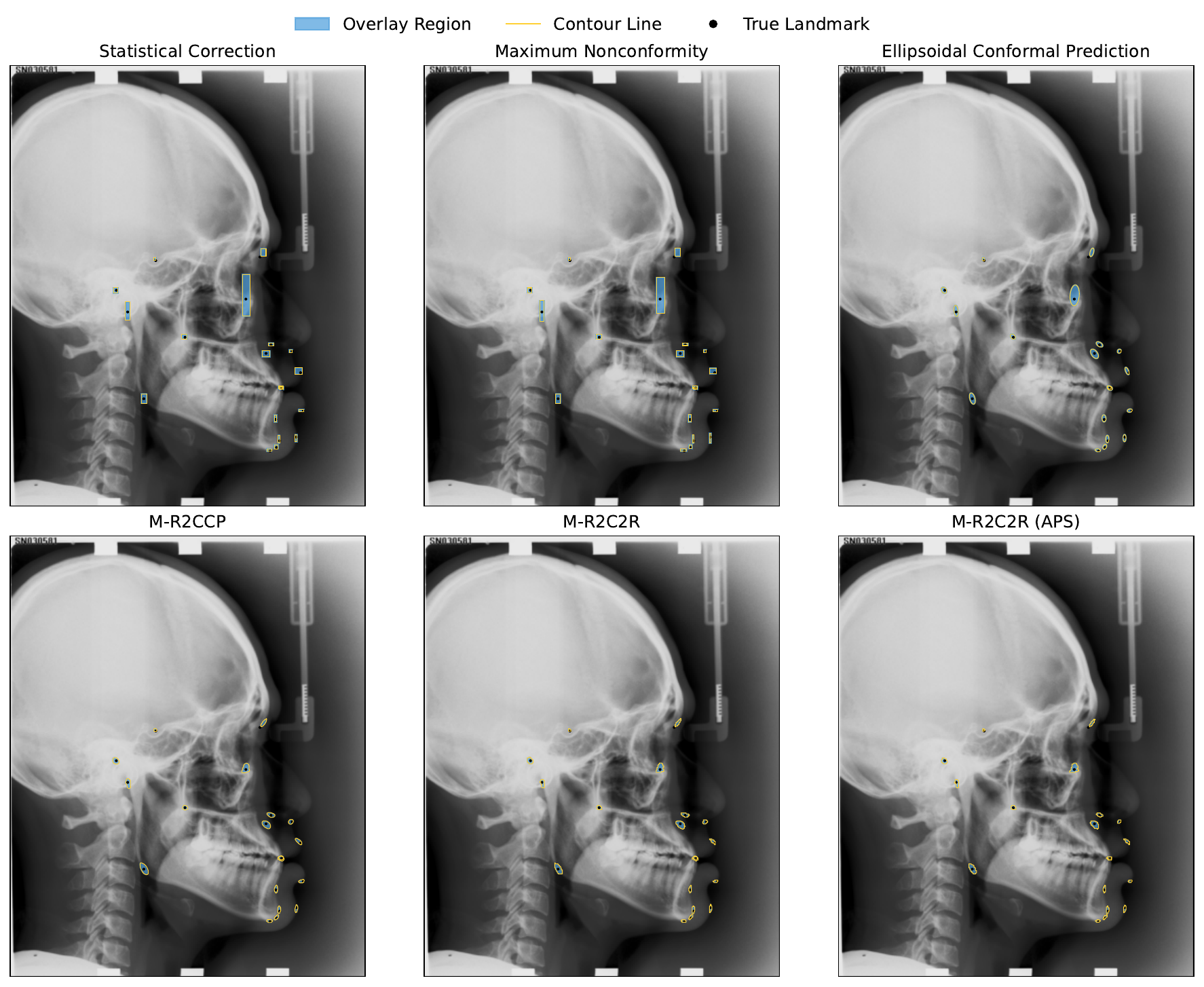}
    \caption{Cephalogram example of conformal prediction regions.}
    \label{fig:ceph-cp-example}
\end{figure}

\subsubsection*{Coverage of prediction regions}
We evaluate the validity of this work by measuring the marginal coverage of the prediction regions of different landmarks on different datasets. The marginal coverage for the ISBI 2015 (2D), CHD (2D° and MML (3D) datasets are presented in box plots in respectively Fig. \ref{fig:coverage-isbi2015}, \ref{fig:coverage-chd} and \ref{fig:coverage-mml}. We immediately observe that the frequently used sample inference approaches (MC dropout, deep ensemble, and TTA), combined with the normality assumption of the predictive uncertainty, drastically underestimate the prediction uncertainty. This can be attributed to the inability of the approaches to account for all sources of uncertainty, as discussed in the related work (see Section \ref{sec:related-work-uq}). The heatmap-derived approach, proposed by \cite{thaler_modeling_2021}, seems to account a bit more for the predictive uncertainty, but it still systematically underestimates the total predictive uncertainty.
To resolve this underestimation of total predictive uncertainty, we propose the conformal prediction framework to give us a finite-sample probabilistic coverage guarantee. The coverage results, presented in Fig. \ref{fig:coverage-isbi2015}, \ref{fig:coverage-chd}, and \ref{fig:coverage-mml}, show that the conformal prediction framework gives us a valid prediction region. We note that across all three datasets, as expected, the Bonferroni correction and max nonconformity approaches give slightly more conservative prediction regions. 

When temperature scaling is applied, the Naive R2CR approaches perform reasonably well on the ISBI 2015 and CHD datasets, meaning that the temperature scaling ensures that probability distribution resembles the Oracle predictive distribution. However, temperature scaling cannot guarantee this, and this can be clearly seen for MML (3D) dataset, as the prediction regions are significantly undercover even with temperature scaling, pressing the need for the conformal prediction framework. It is interesting to note that, in general, the pixel-averaging approach using deep ensembles generally increases the validity of the prediction regions. Thus, in general, the pixel-averaging approach using deep ensembles increases the landmark localization approach and calibration of the pixel probabilities in the one-hot approach.

\begin{figure}
    \centering
    \includegraphics[width=\textwidth]{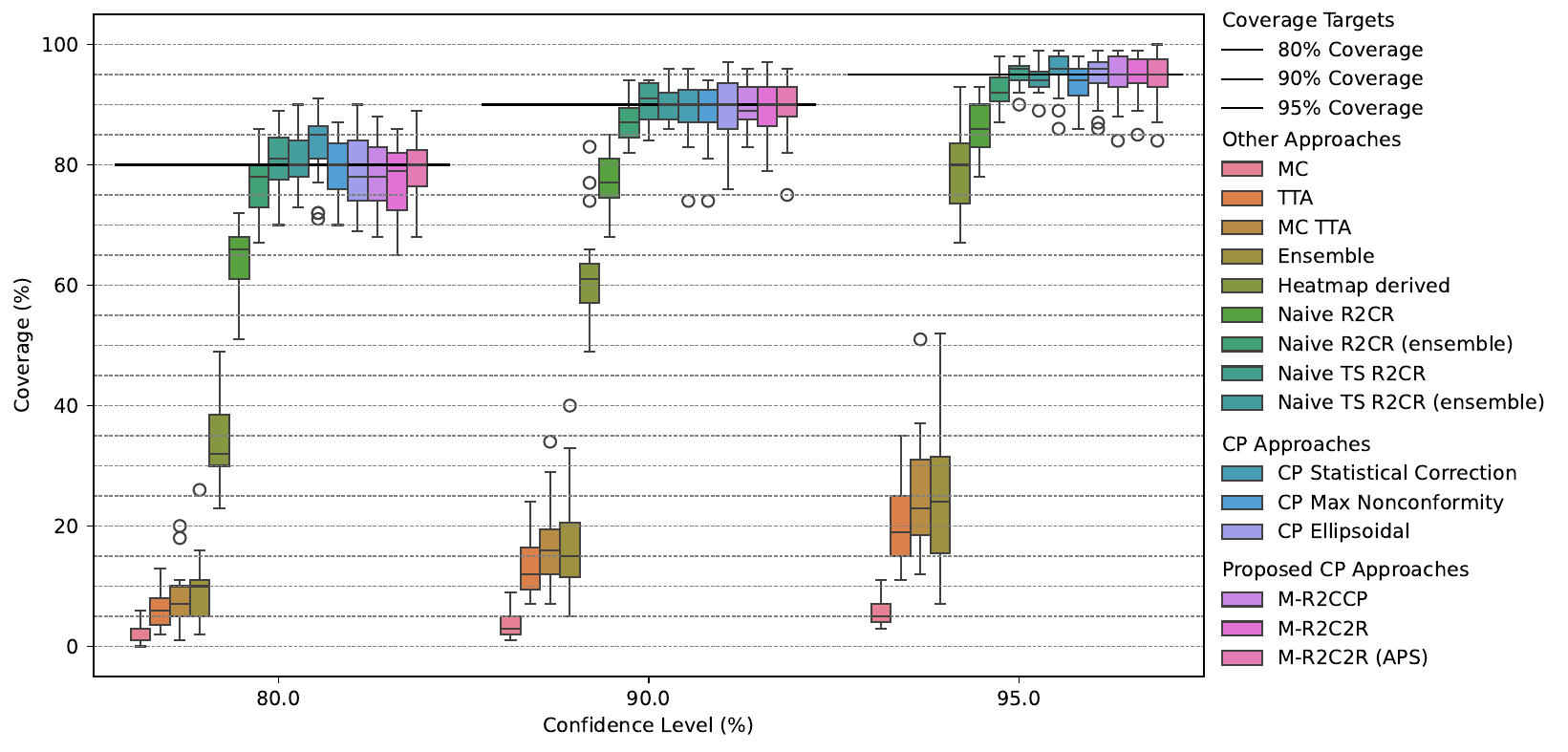}
    \caption{Box plot showing the empirical coverage of prediction regions for different landmarks (19 landmarks), evaluating various uncertainty quantification approaches at different confidence target levels on the ISBI 2015 (2D) dataset test set.}
    \label{fig:coverage-isbi2015}
\end{figure}

\begin{figure}
    \centering
    \includegraphics[width=\textwidth]{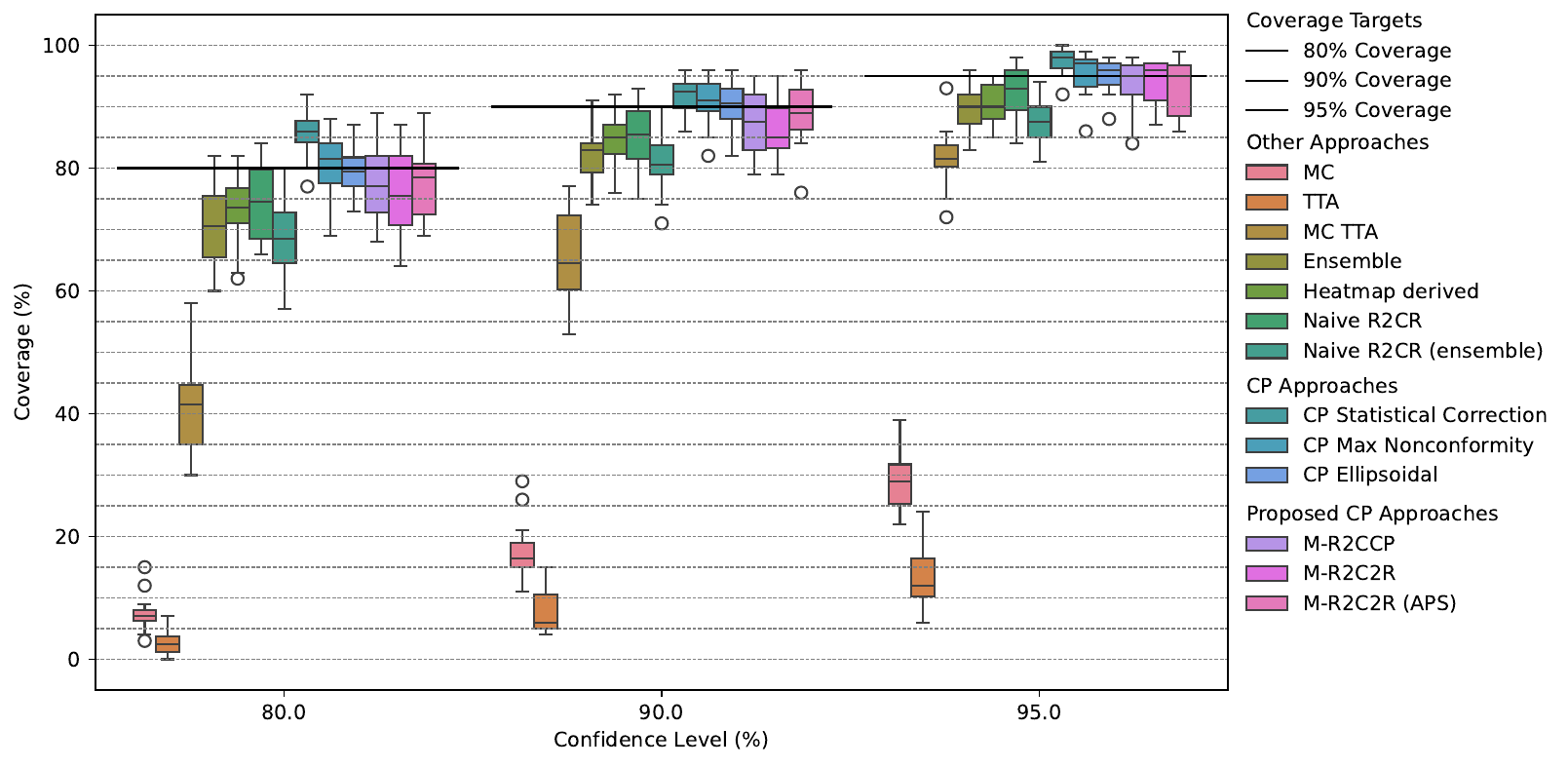}
    \caption{Box plot showing the empirical coverage of prediction regions for different landmarks (14 landmarks), evaluating various uncertainty quantification approaches at different confidence target levels on the MML (3D) dataset test set.}
    \label{fig:coverage-mml}
\end{figure}

\begin{table}[htp]
\centering
\caption{Evaluation of 95\% prediction regions of different uncertainty quantification (UQ) approaches on the ISBI 2015 (2D) test set. The best results are highlighted in bold.}
\label{tab:isbi2015-uq}
\begin{tabular}{@{}llllllll@{}}
\toprule
UQ Approach              & Coverage (\%) & \multicolumn{5}{c}{Efficiency ($\text{mm}^2$)} & Adaptivity ($r_s$) \\ \cmidrule(lr){3-7}
                      &         & Mean            & Std.   & Median          & Q1     & Q3     &                 \\ \midrule
MC                    & 5.47\%  & 0.3015          & 1.2333 & 0.0862          & 0.0472 & 0.1836 & 0.2642          \\
TTA                   & 20.68\% & 2.7351          & 50.134 & 0.5793          & 0.3694 & 0.9650 & 0.3699          \\
MC TTA                & 25.47\% & 1.4625          & 3.9245 & 0.5889          & 0.3778 & 1.0562 & 0.3732          \\
Ensemble              & 25.21\% & 5.0906          & 65.185 & 0.7687          & 0.4734 & 1.6475 & \textbf{0.4056} \\
Heatmap derived       & 78.63\% & 5.8981          & 5.5784 & 4.2713          & 3.2454 & 6.6309 & 0.3958          \\
Naive R2CR            & 86.26\% & 19.950          & 506.75 & 5.7200          & 3.9200 & 9.0150 & 0.4310          \\
Naive R2CR (ensemble) & 92.47\% & 9.1205          & 6.9787 & 6.9300          & 4.8200 & 11.035 & 0.3867          \\
Naive TS R2CR         & 95.32\% & 676.41          & 3959.4 & 10.305          & 6.4650 & 17.940 & 0.3860          \\
Naive TS R2CR (ensemble) & 94.26\%       & 10.921    & 9.3519    & 8.0900   & 5.6300   & 13.263   & 0.3773             \\ \hdashline
CP Statistical Correction         & 95.53\% & 55.291          & 245.29 & 14.620          & 7.9068 & 32.157 & 0.3703          \\
CP Max Nonconformity & 93.42\% & 44.982          & 226.92 & 11.465          & 6.4124 & 23.266 & 0.3833          \\
CP Ellipsoidal        & 94.63\% & 15.812          & 18.067 & 9.7825          & 7.2443 & 18.094 & 0.3385          \\ \hdashline
M-R2CCP               & 94.63\% & \textbf{12.081} & 8.0110 & 9.0000          & 7.2200 & 14.348 & 0.3360          \\
M-R2C2R               & 94.84\% & 12.202          & 9.1147 & \textbf{8.3350} & 6.6300 & 15.163 & 0.3127          \\
M-R2C2R (APS)         & 94.63\% & 13.213          & 13.278 & 8.5900          & 6.3175 & 16.463 & 0.3528          \\ \bottomrule
\end{tabular}
\end{table}
\begin{table}[htp]
\centering
\caption{Evaluation of 95\% prediction regions of different uncertainty quantification (UQ) approaches on the MML (3D) dataset test set. The best results are highlighted in bold.}
\label{tab:mml-uq}
\begin{tabular}{@{}llllllll@{}}
\toprule
UQ Approach & Coverage (\%) & \multicolumn{5}{c}{Efficiency ($\text{mm}^3)$} & Adaptivity ($r_s$) \\ \cmidrule(lr){3-7}
                         &         & Mean            & Std.    & Median          & Q1     & Q3     &                 \\ \midrule
MC                       & 29.50\% & 232.50          & 1776.9  & 2.1958          & 0.8617 & 8.9424 & \textbf{0.5794} \\
Ensemble                 & 13.36\% & 63.108          & 574.84  & 0.2323          & 0.0732 & 1.3626 & 0.4437 \\
Heatmap derived          & 81.86\% & 626.91          & 3453.1  & 21.555          & 9.2268 & 69.694 & 0.3015          \\
Naive R2CR               & 89.86\% & 1739.4          & 9990.2  & 39.653          & 22.812 & 75.474 & 0.3547          \\
Naive R2CR (ensemble)    & 90.29\% & 1661.5          & 10607   & 29.803          & 20.506 & 53.173 & 0.3395          \\
Naive TS R2CR            & 92.57\% & 3133.1          & 13212   & 52.697          & 28.582 & 116.43 & 0.3499          \\
Naive TS R2CR (ensemble) & 87.50\% & 572.89          & 4783.6  & 23.842          & 16.767 & 41.376 & 0.3415          \\ \hdashline
CP Statistical Correction            & 97.43\% & 3.6e+07         & 5.6e+08 & 542.33          & 109.93 & 7733.2 & 0.2652          \\
CP Max Nonconformity    & 95.36\% & 2.8e+07         & 4.1e+08 & 483.26          & 101.36 & 6759.5 & 0.2520          \\
CP Ellipsoidal           & 95.00\% & 2971.7          & 19684   & 73.406          & 33.235 & 251.00 & 0.2869          \\ \hdashline
M-R2CCP                  & 93.21\% & \textbf{66.044} & 65.039  & 51.268          & 33.728 & 76.817 & 0.2756          \\
M-R2C2R                  & 93.71\% & 68.323          & 62.958  & 53.617 & 34.928 & 82.819 & 0.2922          \\
M-R2C2R (APS)            & 93.07\% & 4440.1          & 24056   & \textbf{51.067} & 27.707 & 94.819 & 0.2765          \\ \bottomrule
\end{tabular}
\end{table}

\subsubsection*{Efficiency and adaptivity of prediction regions}
In the evaluation of prediction regions, validity serves as the primary criterion, as it underpins the interpretability and reliability of prediction regions. Additionally, it is easy to get good efficiency when you compromise on validity. Consequently, our efficiency analysis focuses exclusively on conformal prediction approaches, which provide theoretical finite-sample validity guarantees. We assess efficiency through the measurement of prediction region area or volumes, utilizing both mean and median metrics to account for the variability in region size across different images, a variation that may appropriately reflect underlying predictive uncertainty. The results of our experiments are presented in Table \ref{tab:isbi2015-uq}, \ref{tab:chd-uq}, and \ref{tab:mml-uq}. Additionally, to give an idea of the prediction regions, Fig. \ref{fig:ceph-cp-example} shows the different conformal prediction regions on a cephalogram from the ISBI 2015 dataset.

We observe that the Bonferroni correction and max nonconformity approaches generate the largest prediction regions. This can be attributed to two factors: their inherently conservative nature in region estimation and, more significantly, their constraint to axis-aligned hyperrectangular prediction region. In contrast, the Mahalanobis nonconformity scores offer enhanced flexibility through ellipsoidal prediction regions, whose dimensions and orientations are determined by the predicted covariance matrix, resulting in improved efficiency.

Our M-R2CCP and M-R2C2R approaches demonstrate superior performance, capable of representing non-convex regions (as illustrated in Fig. \ref{fig:pred-region-2D} and \ref{fig:pred-region-3D}), and consistently achieve the highest efficiency metrics. Notably, the various implementations of M-R2CCP and M-R2C2R exhibit comparable performance levels.

The relationship between prediction error and region size is visualized through scatter plots in Fig. \ref{fig:isbi2015-adaptivity}, \ref{fig:chd-adaptivity}, and \ref{fig:mml-adaptivity}. While the APS variant of M-R2C2R exhibits enhanced adaptivity—aligning with its design objectives—the practical significance of these adaptivity differences requires further investigation. It is noteworthy that conformal ellipsoid prediction regions can inherit adaptivity characteristics from sampling-based uncertainty quantification approaches (MC dropout, TTA, or deep ensembles) by leveraging their covariance matrices.

Furthermore, it is important to note that the high Spearman correlation observed in the MC dropout approach for the MML dataset (Table \ref{tab:mml-uq}) can be primarily attributed to elevated prediction errors resulting from the landmark sampling methodology.

\section{Conclusion}
\label{sec:conclusion}
Through extensive empirical evaluation across datasets in both 2D and 3D domains, we demonstrate that commonly used uncertainty quantification approaches—such as MC dropout, deep ensembles, and test-time augmentation; systematically underestimate the total predictive uncertainty when combined with normality assumptions. To address these limitations, this work introduces the conformal prediction framework for reliability quantifying uncertainty in anatomical landmark localization, addressing a critical gap in the literature. 

Besides introducing existing multi-output conformal prediction approaches for uncertainty quantification in landmark localization, we introduce M-R2CCP and its variant M-R2C2R, two novel approaches that guarantee finite-sample validity while maintaining computational efficiency. Our experimental results demonstrate that M-R2CCP and M-R2C2R consistently outperform existing multi-output conformal prediction approaches regarding validity and efficiency. Unlike conventional methods that produce axis-aligned hyperrectangular or ellipsoidal regions, our approaches can generate flexible, non-convex prediction regions that better capture the underlying uncertainty structure of landmark predictions. Future work should investigate if this approach is as successful for other multi-output regression problems.

In general, this work highlights the importance of rigorous validity guarantees in clinical applications. The proposed conformal prediction framework represents a significant step forward in reliable uncertainty estimation for anatomical landmark localization, providing clinicians with trustworthy confidence measures for their diagnoses. Future work could explore extending these methods to handle missing landmarks, investigating alternative interpolation techniques for discrete-to-continuous mapping, and designing automatic decision-making algorithms that leverage the prediction regions. 


\section*{Data Availability}
All data is openly available except the CHD dataset, which is a proprietary dataset. The code related to this work can be found here: \hyperlink{https://github.com/predict-idlab/landmark-uq}{https://github.com/predict-idlab/landmark-uq}. The methods presented in this work will also be implemented in the \texttt{landmarker} package.

\section*{Funding sources}
Jef Jonkers is funded by the Research Foundation Flanders (FWO, Ref. 1S11525N).

\bibliographystyle{unsrtnat} 
\bibliography{references}

\appendix
\section{Proofs}
\label{ap:proofs}

\bonferroniProp*
\begin{proof}
    The validity of each ICP follows from Proposition 4.1 in \citet{vovk_algorithmic_2022}, and by applying Boole's inequality of the individual prediction intervals,
    \begin{align}
        \mathbb{P}_{Z \sim P_{Z}}\left(Y_{n+1} \not\in \hat{C}_{\alpha}(X_{n+1};Z_{1:n})\right) &\leq \alpha \\
        \mathbb{P}_{Z \sim P_{Z}}\left(\bigcup_{j=1}^d Y_{n+1,j} \not\in \hat{C}_{\alpha_t,j}\left(X_{n+1};Z_{1:n}\right)\right) &\leq \alpha \\
        \sum_{j=1}^d \mathbb{P}_{Z \sim P_{Z}}\left(Y_{n+1,j} \not\in \hat{C}_{\alpha,j}\left(X_{n+1};Z_{1:n}\right)\right) &\leq \alpha
    \end{align}
    which shows that inequality can only be guaranteed when $\alpha_t \leq \frac{\alpha}{d}$.
\end{proof}

\dtocProp*
\begin{proof}
    Starting from the marginal coverage guarantee in the original domain (see Equation \ref{eq:marginal-guarantee}), we aim to show that this extends to the transformed domain. By the construction of the transformed predictive set:  the set $g(\hat{C}_{\alpha}(X_{n+1};Z_{1:n})) = \bigcup_{y \in \hat{C}_{\alpha}(X_{n+1}; Z_{1:n})} g(y)$ contains all possible transformed values $g(y)$ for $y \in \hat{C}_{\alpha}(X_{n+1};Z_{1:n})$. By definition of the transformation \(g\), the event \(g(Y_{n+1}) \cap g\left(\hat{C}_{\alpha}(X_{n+1}; Z_{1:n})\right) \neq \emptyset\) is true if and only if there exists \(y \in \hat{C}_{\alpha}(X_{n+1}; Z_{1:n})\) such that \(g(Y_{n+1})\) overlaps with \(g(y)\). Since \(Y_{n+1} \in \hat{C}_{\alpha}(X_{n+1}; Z_{1:n})\) implies that \(g(Y_{n+1})\) is included in \(\hat{C}'_{\alpha}(X_{n+1}; Z_{1:n})\), the inclusion relationship is preserved:
    \[
    \mathbb{P}_{Z \sim P_{Z}}\left(Y_{n+1} \in \hat{C}_{\alpha}\left(X_{n+1}; Z_{1:n}\right)\right) \leq \mathbb{P}_{Z \sim P_{Z}}\left(g(Y_{n+1}) \cap g\left(\hat{C}_{\alpha}(X_{n+1}; Z_{1:n})\right) \neq \emptyset\right).
    \]

    Finally, since the original coverage guarantee holds for \(Y_{n+1}\) in the untransformed space (Equation \ref{eq:marginal-guarantee}), the transformed set $g\left(\hat{C}_{\alpha}(X_{n+1};Z_{1:n})\right)$ constructed as a union ensures the marginal guarantee in the transformed space:
    \[
    \mathbb{P}_{Z \sim P_{Z}}\left(g(Y_{n+1}) \cap g\left(\hat{C}_{\alpha}(X_{n+1};Z_{1:n})\right) \neq \emptyset\right) \geq 1 - \alpha.
    \]
\end{proof}

\transGuarProp*
\begin{proof}
    The marginal coverage guarantee in Equation \ref{eq:transGuarProp-marginal-guarantee} for monotonic and invertible transformation $g$ are guaranteed because the inclusion relationships are preserved under transformation $g$. For the invertible transformation, this evident invertibility ensures that no information is lost, so $Y_{n+1} \in\hat{C}_{\alpha}(X_{n+1};Z_{1:n}) $ is equivalent to $g(Y_{n+1}) \in g\left(\hat{C}_{\alpha}(X_{n+1};Z_{1:n})\right)$. The same equivalence is present for monotonic transformation since the transformation preserves the relative ordering of elements.  
\end{proof}

\newpage
\section{Extensive literature review}
\subsection{Uncertainty quantification in landmark localization}
\label{ap:uq_landmark}
\subsubsection{Dropout variational inference (Monte Carlo dropout)}
\citet{drevicky_evaluating_2020} utilize Monte Carlo (MC) dropout \cite{gal_dropout_2016} to quantify the uncertainty of a static heatmap regression model, deriving solely approximation (epistemic) uncertainty from the variance in MC sample predictions. 

\citet{lee_automated_2020} adopt a two-stage model combining CNN-based region-of-interest (ROI) detection with a coordinate regression model using a CNN. They apply MC dropout in the second stage, use the variance of the MC samples as an uncertainty estimate, and leverage it for producing 95\% confidence regions, assuming that the uncertainty distribution is a bivariate normal. The reliability of these regions is not validated, which is crucial for use in a clinical context.

Additionally, \citet{jafari_u-land_2022} applies MC dropout to measure pixel-level epistemic uncertainty in static heatmap regression. Epistemic uncertainty is leveraged to detect out-of-distribution (OOD) frames, as the authors believe and demonstrate that OOD frames generally have higher epistemic uncertainty. They demonstrate this on the critical video frame landmark detection tasks in ultrasound imaging videos of the heart. Besides quantifying epistemic uncertainty, in \citet{jafari_u-land_2022}, they also quantify pixel-level aleatoric uncertainty, as they expect that in non-key frames, the regions around an obscured landmark region have high aleatoric uncertainty. They learn the aleatoric uncertainty by performing variational inference on the pixel level.


\subsubsection{Test time augmentations}
\citet{ma_volumetric_2020} introduce a test-time augmentation (TTA) strategy to improve prediction accuracy and robustness while generating 90\% prediction regions. Their method involves applying random translation shifts as the augmentation technique. To form the prediction regions, they calculate the standard deviations of the TTA predictions along the three spatial axes, assuming that the error distribution follows an uncorrelated multivariate Gaussian. The evaluation of uncertainty estimates is limited to examining the correlation between the size of these prediction regions and the localization error.

\subsubsection{Gaussian processes} 
\citet{schobs_bayesian_2023} propose a two-stage method for uncertainty quantification using Gaussian processes. The initial stage employs a static heatmap regression model to generate a coarse landmark prediction. This prediction defines a region of interest, which is input for a Convolutional Gaussian Process (CGP). The CGP then produces a distribution of probable landmark locations, modeled as a bivariate normal distribution, capturing total uncertainty. The authors primarily assess the model's fit to the bivariate normal distribution but do not evaluate the validity of the uncertainty estimates comprehensively. Additionally, this approach's point prediction performance is considerably lower than that of other methods.

\subsubsection{Ensemble learning}
In \citet{drevicky_evaluating_2020, schobs_uncertainty_2023}, they propose to use an ensemble model consisting of independently trained and randomly initialized static heatmap regression models; the uncertainty is quantified by measuring the variance in the landmark predictions of the ensembles. This approach only considers epistemic uncertainty, more specifically, approximation uncertainty. However, it is argued that this approach generally performs better than Monte Carlo dropout methods due to more decorrelated sampled inference models \citep{fort_deep_2020}.

\subsubsection{Maximum likelihood estimation}
\citet{mccouat_contour-hugging_2022} approach landmark localization as a classification problem by utilizing one-hot encoded heatmaps as targets. They implement an encoder-decoder network architecture and apply a 2D softmax function separately to each output channel, with each channel corresponding to a specific landmark's heatmap. To improve model calibration, they use temperature scaling. This approach produces non-parametric, contour-hugging heatmaps closely following anatomical features, such as the skull in cephalometric landmark detection. The authors assess the reliability of pixel-wise probability estimates, which are generally very low (below 0.10), achieving an Expected Calibration Error (ECE) of 0.7. However, since the probabilities are so small, the effectiveness of calibrated confidence regions remains uncertain, as the study does not implement or evaluate these regions. The authors introduce the Expected Radial Error (ERE) as an uncertainty estimate, which shows a strong correlation (0.96) with the true radial error and can be used to identify potentially inaccurate predictions. They use this uncertainty estimate to flag erroneous predictions (ERE $\geq$ 4 mm).

Another approach is to leverage an intermediate heatmap approach and use a maximum likelihood estimation, assuming a specific probability distribution and an estimated covariance matrix. In \citet{kumar_uglli_2019}, they assume a bivariate Gaussian distribution and minimize the negative log-likelihood. They evaluate the uncertainty estimate by plotting the estimated uncertainty (represented as the determinant of the covariance matrix) and the localization error. In \citet{kumar_luvli_2020}, they extend the approach to deal with occluded landmarks.

\subsubsection{Infering uncertainty from heatmaps}
\citet{drevicky_evaluating_2020} introduce normalized maximum heatmap activation (MHA) from a static heatmap regression model as a confidence measure for landmark predictions. A limitation of this method is that it only identifies landmarks at the peak value of the heatmap, lacking subpixel accuracy. Additionally, the confidence metric lacks a true statistical interpretation of uncertainty, instead serving as a correlate of confidence. Determining the source of uncertainty quantified is challenging, though it likely encompasses elements of both aleatoric and epistemic uncertainty. \cite{schobs_uncertainty_2023} enhance this approach by using an ensemble of models to compute an ensemble-MHA, averaging MHA values across ensemble predictions for greater robustness.

In \citet{payer_uncertainty_2020, thaler_modeling_2021}, they propose to learn the Gaussian covariances of target heatmaps such that they represent, as they call, a \textit{dataset-based annotation uncertainty}. This could be seen as homoscedastic aleatoric uncertainty. They learn these covariance parameters of the anisotropic Gaussian function $(\theta_i, \sigma_i^{maj}, \sigma_i^{min})$ simultaneously with weights of the network. For the $i$th landmark, $\theta_i$ represents the rotation of the Gaussian function's major axis, and $\sigma_i^{maj}$ and $\sigma_i^{min}$ present its extent in major and minor axis, respectively, for the $i$th landmark. They add to the mean squared error loss function, which minimizes the predicted $\hat{h}_i$ and target $h_i$ heatmaps, a regularization term which penalizes $\sigma_i^{maj} \sigma_i^{min}$ with $\alpha$ to avoid that $\sigma_i^{maj} \sigma_i^{min} \rightarrow \infty$ and $h_i \approx 0$. The authors also propose quantifying so-called \textit{sample-based annotation uncertainty}, which you could see as a combination of aleatoric and epistemic uncertainty, by fitting a Gaussian function to the predicted landmarks using a robust least squares curve fitting method. Besides the proposed approach uncertainty quantification capabilities, they also show strong performance in terms of accuracy and robustness. They evaluate the reliability of their uncertainty estimates by plotting the relation between the learned Gaussian sizes $\hat{\sigma}_i^{maj} \hat{\sigma}_i^{min}$ and the point prediction errors, but fail to evaluate the calibration of their fitted Gaussian distribution properly.

\citet{payer_uncertainty_2020, thaler_modeling_2021} also try to connect the inter-observer variability of landmark annotation and the learned homoscedastic aleatoric uncertainty, as the latter represents this inter-observer variability. To evaluate this hypothesis, \citet{thaler_modeling_2021} annotated five selected landmarks by a total of 11 annotators for 100 images and compared it with the learned homoscedastic aleatoric uncertainty and the average of the distribution parameters of the sample-based annotation uncertainty. Based on their visual comparisons of the fitted annotation Gaussian distribution, using the 11 annotations and their proposed approach, they conclude that there is a correspondence between them. However, the estimated inter-variability through the learned homoscedastic aleatoric uncertainty and \textit{sample-based annotation uncertainty} is consistently smaller. We believe this is primarily due to an error in their training approach since they trained their proposed method on the mean coordinate of the 11 annotations per image. Following this procedure, you are actually trying to determine the intra-variability of the mean estimate of the 11 annotators, assuming their proposed approach can even model the true aleatoric uncertainty. We believe training the model with augmented datasets would be a better approach to estimating the inter-variability. Each image is presented 11 times with a different annotator label. Additionally, due to the construction of the training procedure and loss function, the parameters of the uncertainty distribution are learned in a heuristic way, even assuming the true aleatoric uncertainty is homoscedastic and follows the bivariate normal distribution. We leave this for future work.

In \citet{kwon_multistage_2021}, they propose to use intermediate heatmaps, which represent probability density function, and a softmax operation \citep{luvizon_2d3d_2018} to get the desired anatomical landmarks. They use these intermediate heatmaps to get confidence regions of the estimated landmark position. They propose to visualize it with standard deviational ellipses of $3\sigma$, thus essentially assuming a bivariate normal distribution.

\subsection{Multi-output conformal prediction}
\label{ap:cp_multi_output}


\subsubsection{Density-based}
\label{ap:denisty-based}
The foundational work by \citet{lei_efficient_2011, lei_distribution-free_2013} introduced the first framework for multi-dimensional target conformal prediction, combining density estimation, e.g., kernel density estimation (KDE), and conformal prediction. Their method uses the estimated density evaluated at $y$ as a nonconformity measure. \citet{lei_conformal_2015} further extend this work to functional data, which was later generalized by \citet{diquigiovanni_importance_2021, diquigiovanni_conformal_2022}. In \citet{english_conformalised_2024}, they proposed using conditional normalizing flows for producing the density estimation to deal with the computational burden, mitigate a grid search, and generate prediction regions; they propose to use Monte Carlo sampling of the multivariate label space.

\subsubsection{Statistical correction}
\label{ap:stat-cor}
Much research has focused on constructing hyperrectangular prediction regions by considering each target dimension as a separate conformal predictor and treating it as the multiple comparison problem prevalent in statistics. \citet{messoudi_conformal_2020} proposed extending single-dimensional nonconformity scores to m-dimensional problems by applying conformal prediction for each dimension separately. To achieve the desired global coverage $(1-\alpha)$, they employed the Šidák correction, setting the individual significance levels to $\alpha_t=1-\sqrt[d]{1-\alpha}$. While computationally straightforward, this approach has limitations. It only maintains exact validity for independent scores (considering the individual conformal prediction approaches have an exact validity guarantee) and provides conservative validity for positively dependent nonconformity scores. A problem arises when the scores are negatively dependent, which can invalidate the prediction regions.

Concurrently, \citet{stankeviciute_conformal_2021} proposed a conformal prediction approach to provide validity guarantees for prediction regions of multi-step time-series forecasting using a Bonferroni correction, setting the individual significance levels to $\alpha_t=\frac{\alpha}{d}$. The validity proof relied on the validity guarantee inductive conformal prediction and Boole's inequality. Note that this often results in highly conservative prediction regions. 

\subsubsection{Copula-based}
\label{ap:copula}
\citet{messoudi_copula-based_2021} introduced a more sophisticated approach than the simple and conservative statistical correction approaches, using copulas to model dependencies between target variables. Their method uses Sklar's theorem to define individual \citep{sklar_fonctions_1959} to define individual significance levels $\alpha_t$ based on global significance level $\alpha$ and $m$-dimensional copula $C$. They proposed three specific copulas: the independent copula (equivalent to Šidák correction), the Gumbel copula (single-parameter estimation), and the empirical copula (non-parametric estimation). A downside of their approach is that validity guarantees are only given when the copula is known. In \citet{mukama_copula-based_2024}, they applied copula-based conformal prediction to the object detection problem.
Copula-based conformal prediction was extended by \citet{zhang_improved_2023}, who allowed different significance values across dimensions by optimizing prediction region volume. \citet{sun_copula_2024} further formalized the approach, providing validity proofs under IID assumptions and using the empirical copula. They achieve validity by using two calibration sets, one for getting the cumulative distribution of the conformity score for each time-step or output variable and the other to estimate the copula. A downside of this approach is that you need quite a large calibration set.

\subsubsection{Ellipsoidal prediction regions}
\label{ap:ellipsoidal}
A parallel research stream focuses on ellipsoidal prediction regions. \citet{kuchibhotla_exchangeability_2021} introduced the Mahalaobis distance, a nonconformity measure, requiring both m-dimensional output predictions and an $m \times m$-dimensional sample inverse-covariance matrix. \citet{johnstone_conformal_2021} formalized ellipsoid conformal prediction regions in the context of robust optimization and full conformal prediction. \citet{messoudi_ellipsoidal_2022} extended the method to generate object-specific ellipsoidal regions by estimating conditional covariance matrices. \citet{henderson_adaptive_2024} proposed the conformal conditional linear expectation method, providing adaptive ellipsoid regions with validity guarantees by leveraging the Mahalanobis distance and ridge estimations of covariance matrix condition on the object $x$. Additionally, they study the asymptotic properties of the ellipsoid.

\subsubsection{Multi-modal prediction regions}
\label{ap:multi-modal}
Several works focus on multi-output regression problems with multi-modal prediction regions. Conventionally, conformal prediction approaches often produce single-connected areas, which can be inefficient when the true conditional distribution is multi-modal. The density-based approach of \citet{lei_efficient_2011} can capture a multi-modal structure. A possible problem of the produced prediction regions of the density-based approach is that they are possibly non-convex and can be a problem for downstream tasks, such as decision-making \citep{johnstone_conformal_2021}. Additionally, the interpretability of these regions for end users can be confusing. Therefore, \citet{tumu_multi-modal_2023} proposed a method for multi-modal prediction regions, ensuring flexible but convex prediction regions,  using a two-stage calibration process using two calibration sets. The first calibration set clusters residuals using KDE and defines shape template functions; the second set combines the shape template into a single nonconformity score. Besides the computational complexity of the method, a downside is that shapes are determined by the calibration set and, thus, are not fully adaptive. 

\citet{wang_probabilistic_2023} introduced Probabilistic Conformal Prediction (PCP), which allows discontinuous prediction regions through a sampling-based approach. Their methodology begins by fitting a conditional generative model on the training data. For calibration, the method generates K-independent prediction samples for each example and calculates the conformity score as the minimum distance between these samples and the true label. When making predictions for test examples, the approach generates K samples from the fitted distribution and constructs prediction regions by centering balls at each sample point, with radii equal to the calibrated conformity score quantile. The final prediction region is formed by taking the union of these individual balls.

\citet{feldman_calibrated_2023} developed a distribution-free approach for multi-modal quantile regression using deep learning techniques. Their method uses a conditional variational autoencoder to learn a representation where the multi-output distribution becomes unimodal. In this transformed space, they construct quantile regions using non-parametric directional quantile regression, which are then mapped back to the original space and conformally calibrated. While the approach effectively handles complex multi-modal structures and provides theoretical guarantees, it requires substantial training data and involves computationally intensive transformation steps that may lose some fine structural details.

\subsubsection{Mixed-variables types}
\label{ap:mixed-variable}
\citet{dheur_distribution-free_2024} addressed bivariate prediction with mixed continuous-discrete variables, proposing the Conformal HDR (C-HDR) method, which generalizes the High Predictive Density approach \citep{izbicki_cd-split_2022}. While they only consider the bivariate case, this generalization can be used for higher dimensions multi-output problems with continuous and discrete outcomes.

\subsubsection{Other approaches}
\label{ap:other-approaches}
Notably, \citet{diquigiovanni_conformal_2022} introduced the maximum nonconformity score $S_i := \max\left(|Y_{i,1}- \hat{Y}_{i,1}|, \ldots, |Y_{i,d}- \hat{Y}_{i,d}|\right)$, which has been leveraged in many subsequent approaches. \citet{dietterich_conformal_2022} built upon this and proposes SBox and SQbox, which adapt the maximum conformity score by replacing the absolute error as measure with a normalized absolute error \citep{papadopoulos_reliable_2011} and CQR \citep{romano_conformalized_2019}. This evolution continued with \citet{cleaveland_conformal_2024}, who proposed a time series approach using linear complementary programming that generates prediction regions over multiple time steps. Their method introduces a weighted maximum nonconformity score $S_i := \max (a_1 S_{i,1},\ldots, a_T S_{i,T})$ across time horizon T, where parameters $a_t$ sum to one and are optimized using two calibration sets—one for parameter optimization to minimize region size, and another for computing nonconformity scores. While this approach yields less conservative regions than the Bonferroni correction, it requires re-optimization for different significance levels.

\clearpage
\newpage
\section{Landmark localization performance}
\label{ap:det-performance}
\begin{table}[htp]
\centering
\caption{Anatomical landmark localization performance results on ISBI 2015 (2D) dataset. The best results are highlighted in bold.}
\label{tab:isbi-det}
\begin{adjustbox}{width=1\textwidth}
\begin{tabular}{@{}llllll|lllll@{}}
\toprule
Model name          & \multicolumn{5}{c|}{Validation}                                        & \multicolumn{5}{c}{Test}                                   \\ \cmidrule(l){2-11} 
                    & PE (mm) & \multicolumn{4}{c|}{SDR (\%)}                                & PE (mm) & \multicolumn{4}{c}{SDR (\%)}                     \\ \cmidrule(lr){3-6} \cmidrule(l){8-11} 
                    &         & 2 mm         & 2.5 mm       & 3 mm         & 4 mm            &         & 2 mm         & 2.5 mm & 3 mm   & 4 mm            \\ \midrule
SCN                 & 1.1446  & 86.26\%       & 90.53\%       & 93.84\%       & 97.21\%          & 1.2051  & 83.58\%       & 89.68\% & 93.32\% & 96.84\%          \\
SCN ensemble        & 1.101   & 87.21\%       & 91.16\%       & 94.42\%       & 97.58\%          & 1.1672  & 84.95\%       & 90.68\% & 93.89\% & 97.16\%          \\
SCN ensemble LS &
  {\ul \textbf{1.084}} &
  87.21\% &
  91.53\% &
  95.11\% &
  97.74\% &
  {\ul 1.1463} &
  85.37\% &
  {\ul 91.37\%} &
  {\ul 94.47\%} &
  {\ul 97.42\%} \\
SCN MC              & 1.1364  & 86.11\%       & 90.63\%       & 94.05\%       & 97.32\%          & 1.2065  & 83.47\%       & 89.84\% & 93.11\% & 96.95\%          \\
SCN MC LS           & 1.1255  & 86.58\%       & 91.00\%         & 94.47\%       & 97.32\%          & 1.1888  & 84.53\%       & 90.32\% & 94.00\%   & 97.11\%          \\
SCN MC TTA          & 1.0977  & {\ul 87.37\%} & 91.16\%       & 94.58\%       & 97.47\%          & 1.1617  & 85.05\%       & 90.42\% & 93.68\% & 96.89\%          \\
SCN MC TTA LS       & 1.1029  & 86.95\%       & 92.11\%       & 94.68\%       & 97.63\%          & 1.163   & {\ul 85.58\%} & 90.79\% & 93.58\% & 97.42\%          \\
SCN TTA             & 1.0939  & 87.21\%       & {\ul 91.68\%} & 94.42\%       & 97.53\%          & 1.1612  & 84.95\%       & 90.42\% & 93.42\% & 97.11\%          \\
SCN TTA LS          & 1.1128  & 86.26\%       & 91.74\%       & {\ul 94.68\%} & {\ul 97.68\%}    & 1.1654  & 85.00\%         & 90.63\% & 93.74\% & 97.16\%          \\ \midrule
One hot             & 1.3149  & 84.26\%       & 90.47\%       & 93.16\%       & 96.58\%          & 1.2502  & 84.53\%       & 90.11\% & 93.58\% & 96.68\%          \\
\textbf{One hot ensemble} &
  {\ul 1.0861} &
  {\ul \textbf{88.11\%}} &
  {\ul \textbf{93.16\%}} &
  {\ul \textbf{95.95\%}} &
  {\ul \textbf{98.32\%}} &
  {\ul \textbf{1.116}} &
  {\ul \textbf{87.42\%}} &
  {\ul \textbf{93.00\%}} &
  {\ul \textbf{95.53\%}} &
  {\ul \textbf{98.05\%}} \\
One hot ensemble LS & 1.123   & 87.37\%       & 92.47\%       & 95.11\%       & 97.68\% & 1.1366  & 86.74\%       & 92.37\% & 94.89\% & 97.84\% \\
One hot TTA         & 1.1958  & 86.32\%       & 90.95\%       & 94.05\%       & 97.16\%          & 1.1564  & 86.21\%       & 91.37\% & 93.89\% & 97.21\%          \\
One hot TTA LS      & 1.1958  & 86.42\%       & 91.74\%       & 94.42\%       & 97.47\%          & 1.1688  & 86.11\%       & 91.00\%   & 94.05\% & 97.42\%          \\ \bottomrule
\end{tabular}
\end{adjustbox}
\end{table}
\begin{table}[htp]
\caption{Anatomical landmark localization performance results on CHD (2D) dataset. The best results are highlighted in bold.}
\label{tab:chd-det}
\begin{adjustbox}{width=1\textwidth}
\begin{tabular}{llllll|lllll}
\hline
Model name &
  \multicolumn{5}{c|}{Validation} &
  \multicolumn{5}{c}{Test} \\ \cline{2-11} 
 &
  PE (mm) &
  \multicolumn{4}{c|}{SDR (\%)} &
  PE (mm) &
  \multicolumn{4}{c}{SDR (\%)} \\ \cline{3-6} \cline{8-11} 
 &
   &
  2 mm &
  2.5 mm &
  3 mm &
  4 mm &
   &
  2 mm &
  2.5 mm &
  3 mm &
  4 mm \\ \hline
SCN &
  0.4591 &
  99.20\% &
  99.68\% &
  99.84\% &
  100.0\% &
  0.4867 &
  99.32\% &
  99.76\% &
  99.95\% &
  100.0\% \\
SCN ENSEMBLE &
  0.4358 &
  99.52\% &
  99.84\% &
  99.84\% &
  100.0\% &
  0.4670 &
  99.51\% &
  100.0\% &
  100.0\% &
  100.0\% \\
SCN ENSEMBLE LS &
  0.4231 &
  {\ul 99.76\%} &
  {\ul 100.0\%} &
  {\ul 100.0\%} &
  {\ul 100.0\%} &
  0.4569 &
  99.76\% &
  100.0\% &
  100.0\% &
  100.0\% \\
SCN MC &
  0.4599 &
  99.20\% &
  99.68\% &
  99.84\% &
  100.0\% &
  0.4863 &
  99.27\% &
  99.76\% &
  99.95\% &
  100.0\% \\
SCN MC LS &
  0.4519 &
  99.36\% &
  99.84\% &
  99.92\% &
  100.0\% &
  0.4816 &
  99.37\% &
  99.85\% &
  100.0\% &
  100.0\% \\
SCN MC TTA &
  0.3934 &
  99.36\% &
  99.84\% &
  99.92\% &
  100.0\% &
  0.4161 &
  99.61\% &
  99.95\% &
  100.0\% &
  100.0\% \\
SCN MC TTA LS &
  0.3775 &
  99.68\% &
  99.92\% &
  100.0\% &
  100.0\% &
  0.4096 &
  {\ul 99.95\%} &
  {\ul 100.0\%} &
  {\ul 100.0\%} &
  {\ul 100.0\%} \\
SCN TTA &
  {\ul 0.3912} &
  99.28\% &
  99.76\% &
  100.0\% &
  100.0\% &
  0.4131 &
  99.76\% &
  99.90\% &
  99.95\% &
  100.0\% \\
SCN TTA LS &
  0.3772 &
  99.52\% &
  99.92\% &
  100.0\% &
  100.0\% &
  {\ul \textbf{0.4059}} &
  {\ul \textbf{99.95\%}} &
  {\ul \textbf{100.0\%}} &
  {\ul \textbf{100.0\%}} &
  {\ul \textbf{100.0\%}} \\ \hline
One hot &
  0.6115 &
  99.24\% &
  99.58\% &
  99.65\% &
  99.79\% &
  0.5601 &
  99.17\% &
  99.72\% &
  99.86\% &
  99.86\% \\
\textbf{One hot ensemble} &
  {\ul \textbf{0.4835}} &
  {\ul \textbf{99.72\%}} &
  {\ul \textbf{99.79\%}} &
  {\ul \textbf{99.93\%}} &
  {\ul \textbf{99.93\%}} &
  {\ul 0.4750} &
  {\ul \textbf{99.79\%}} &
  {\ul \textbf{99.93\%}} &
  {\ul \textbf{100.0\%}} &
  {\ul \textbf{100.0\%}} \\
One hot ensemble LS &
  0.5646 &
  98.89\% &
  98.96\% &
  99.17\% &
  99.24\% &
  0.4987 &
  99.38\% &
  99.52\% &
  99.59\% &
  99.66\% \\
One hot TTA &
  0.6179 &
  99.31\% &
  99.72\% &
  99.72\% &
  99.79\% &
  0.5672 &
  99.10\% &
  99.72\% &
  99.86\% &
  99.86\% \\
One hot TTA LS &
  0.5998 &
  98.96\% &
  99.31\% &
  99.51\% &
  99.58\% &
  0.5630 &
  99.31\% &
  99.79\% &
  99.93\% &
  99.93\% \\ \hline
\end{tabular}
\end{adjustbox}
\end{table}
\begin{table}[htp]
\centering
\caption{Anatomical landmark localization performance results on MML (3D) dataset. This benchmark dataset is the same as the data subset, only with complete landmarks, used as a benchmark in \citet{he_anchor_2024}. The best results are highlighted in bold.}
\label{tab:mml-det-benchmark}
\begin{adjustbox}{width=1\textwidth}
\begin{tabular}{@{}llllll|lllll@{}}
\toprule
Model Name    & \multicolumn{5}{c|}{Validation}            & \multicolumn{5}{c}{Test}                   \\ \cmidrule(l){2-11} 
              & PE (mm)   & \multicolumn{4}{c|}{SDR (\%)}     & PE (mm)     & \multicolumn{4}{c}{SDR (\%)}      \\ \cmidrule(lr){3-6} \cmidrule(l){8-11} 
              &   & 2 mm   & 2.5 mm & 3 mm   & 4 mm   &    & 2 mm   & 2.5 mm & 3 mm   & 4 mm   \\ \midrule
One hot       & 1.983  & 71.81\% & 80.23\% & 84.57\% & 90.31\% & 1.6938 & 77.50\%  & 86.43\% & 90.24\% & 94.29\% \\
\textbf{One hot ensemble} &
  \textbf{1.601} &
  \textbf{77.81\%} &
  \textbf{87.76\%} &
  \textbf{89.92\%} &
  93.37\% &
  \textbf{1.3918} &
  \textbf{81.67\%} &
  \textbf{91.31\%} &
  \textbf{93.33\%} &
  96.31\% \\
One hot ensemble LS &
  1.632 &
  76.02\% &
  85.46\% &
  89.41\% &
  \textbf{93.62\%} &
  1.4694 &
  81.67\% &
  89.40\% &
  93.10\% &
  \textbf{96.55\%} \\
One hot MC    & 2.0249 & 73.09\% & 84.06\% & 86.86\% & 91.20\%  & 1.6392 & 77.98\% & 87.98\% & 90.48\% & 94.05\% \\
One hot MC LS & 2.0557 & 71.30\%  & 79.21\% & 84.57\% & 90.18\% & 1.6433 & 77.98\% & 86.07\% & 90.12\% & 95.24\% \\ \bottomrule
\end{tabular}
\end{adjustbox}
\end{table}
\begin{table}[htp]
\centering
\caption{Anatomical landmark localization performance results on MML (3D) dataset with adjusted pixel spacing. This benchmark dataset is the same as the data subset, only with complete landmarks, used as a benchmark in \citet{he_anchor_2024}. The best results are highlighted in bold.}
\label{tab:mml-det-adj-benchmark}
\begin{adjustbox}{width=1\textwidth}
\begin{tabular}{@{}llllll|lllll@{}}
\toprule
Model name    & \multicolumn{5}{c|}{Validation}             & \multicolumn{5}{c}{Test}                    \\ \cmidrule(l){2-11} 
              & PE (mm) & \multicolumn{4}{c|}{SDR (\%)}     & PE (mm) & \multicolumn{4}{c}{SDR (\%)}      \\ \cmidrule(lr){3-6} \cmidrule(l){8-11} 
              &         & 2 mm   & 2.5 mm & 3 mm   & 4 mm   &         & 2 mm   & 2.5 mm & 3 mm   & 4 mm   \\ \midrule
One hot       & 2.1221  & 69.90\%  & 78.32\% & 83.93\% & 89.41\% & 1.6937  & 77.50\%  & 86.43\% & 90.24\% & 94.29\% \\
\textbf{One hot ensemble} &
  \textbf{1.6947} &
  \textbf{75.64\%} &
  \textbf{86.61\%} &
  \textbf{89.29\%} &
  \textbf{92.86\%} &
  \textbf{1.3922} &
  \textbf{81.67\%} &
  \textbf{91.31\%} &
  \textbf{93.21\%} &
  96.31\% \\
One hot ensemble LS &
  1.7366 &
  74.11\% &
  83.42\% &
  88.14\% &
  \textbf{92.86\%} &
  1.4693 &
  81.67\% &
  89.40\% &
  93.10\% &
  \textbf{96.55\%} \\
One hot MC    & 2.2365  & 71.05\% & 82.14\% & 85.46\% & 89.41\% & 1.6769  & 75.48\% & 86.79\% & 89.29\% & 92.74\% \\
One hot MC LS & 2.1655  & 69.13\% & 78.70\%  & 83.93\% & 89.29\% & 1.6556  & 78.10\%  & 85.36\% & 90.24\% & 95.24\% \\ \bottomrule
\end{tabular}
\end{adjustbox}
\end{table}

\clearpage
\section{Additional experiments results}
\label{ap:additional-results}
\begin{figure}[htp]
    \centering
    \includegraphics[width=\textwidth]{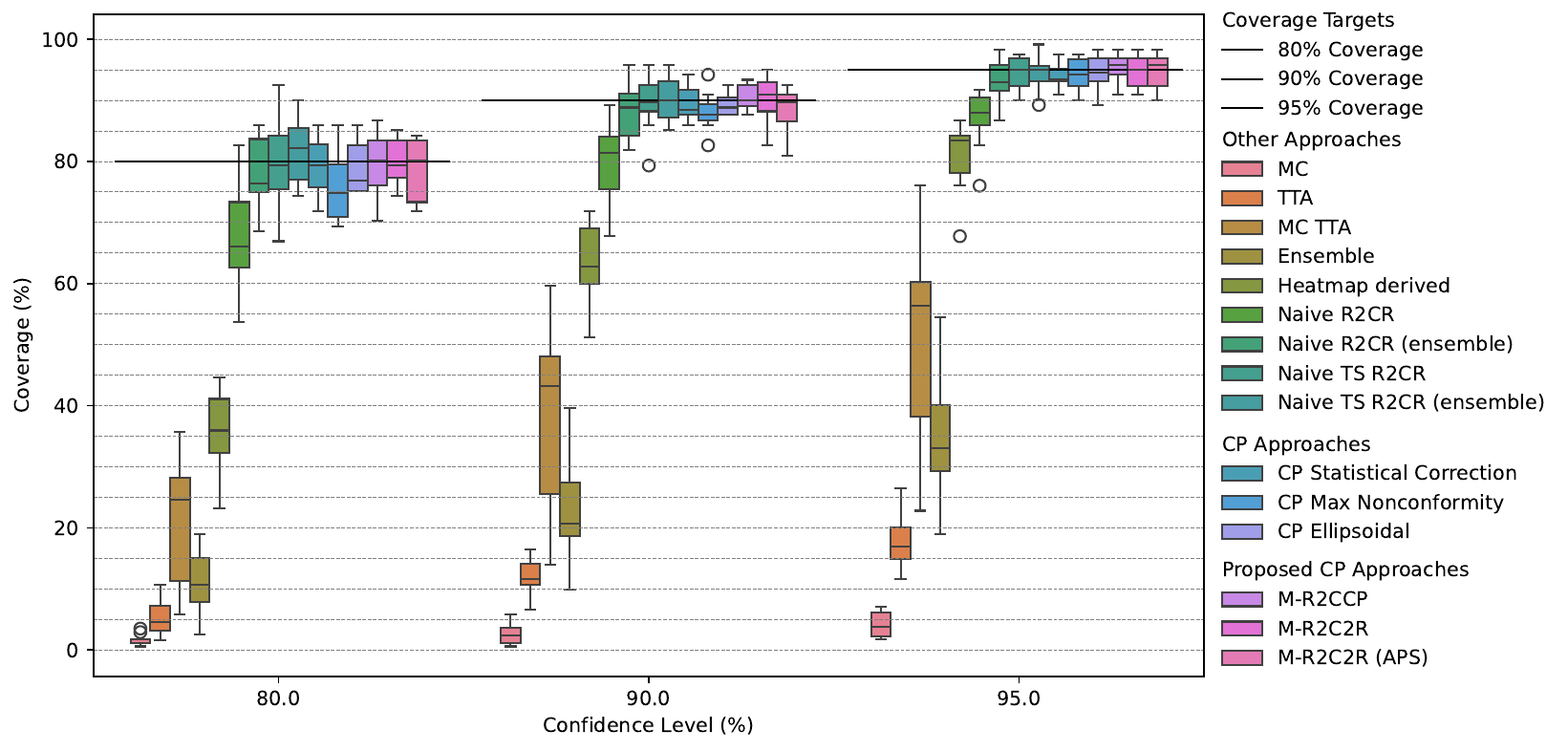}
    \caption{Box plot showing the empirical coverage of prediction regions for different landmarks (12 landmarks), evaluating various uncertainty quantification approaches at different confidence target levels on the CHD (2D) dataset test set.}
    \label{fig:coverage-chd}
\end{figure}

\begin{table}[htp]
\centering
\caption{Evaluation of 95\% prediction regions of different UQ approaches on the CHD (2D) dataset test set. The best results are highlighted in bold.}
\label{tab:chd-uq}
\begin{tabular}{llllllll}
\hline
UQ Approach              & Coverage (\%) & \multicolumn{5}{c}{Efficiency ($\text{mm}^2$)} & Adaptivity ($r_s$) \\ \cline{3-7}
                      &         & Mean   & Std.   & Median          & Q1     & Q3     &                 \\ \hline
MC                    & 4.19\%  & 0.0642 & 0.5059 & 0.0209          & 0.0104 & 0.0485 & 0.3288          \\
TTA                   & 17.98\% & 0.3225 & 2.0733 & 0.2108          & 0.1429 & 0.2888 & 0.3704          \\
MC TTA                & 51.66\% & 0.5910 & 0.8375 & 0.4254          & 0.1794 & 0.7188 & 0.1400          \\
Ensemble              & 34.85\% & 1.5587 & 29.318 & 0.3726          & 0.1538 & 0.6308 & 0.3781 \\
Heatmap derived       & 80.79\% & 1.4402 & 0.6872 & 1.3137          & 0.9721 & 1.8107 & 0.3883          \\
Naive R2CR            & 87.26\% & 2.2778 & 1.3138 & 2.0287          & 1.3138 & 2.9754 & \textbf{0.4025}          \\
Naive R2CR (ensemble) & 93.25\% & 2.4321 & 1.3035 & 2.2026          & 1.4684 & 3.2073 & 0.3888          \\
Naive TS R2CR         & 94.63\% & 31.480 & 405.40 & 3.3232          & 2.1446 & 4.7336 & 0.3922          \\
Naive TS R2CR (ensemble) & 94.56\%       & 2.7101    & 1.4821    & 2.4634   & 1.5843   & 3.5551   & 0.3805             \\ \hdashline
CP  Statistical Correction         & 94.01\% & 4.6423 & 7.2967 & 3.2219          & 2.0695 & 5.3252 & 0.3392          \\
CP Max Nonconformity & 94.28\% & 4.6088 & 6.7433 & 3.1491          & 1.9605 & 5.4012 & 0.3606          \\
CP Ellipsoidal        & 94.84\% & 3.0813 & 1.7368 & 2.6978          & 1.9480 & 3.8588 & 0.3369          \\ \hdashline
M-R2CCP               & 95.59\% & 2.9391 & 1.6140 & 2.4538          & 1.8935 & 3.6710 & 0.3604          \\
M-R2C2R               & 94.63\% & \textbf{2.7559} & 1.5110 & 2.3475 & 1.7002 & 3.5164 & 0.3373          \\
M-R2C2R (APS)         & 94.97\% & 2.7687 & 1.6471 & \textbf{2.2992 }         & 1.6230 & 3.5792 & 0.3498          \\ \hline
\end{tabular}
\end{table}

\begin{figure}[htp]
    \centering
    \includegraphics[width=\textwidth]{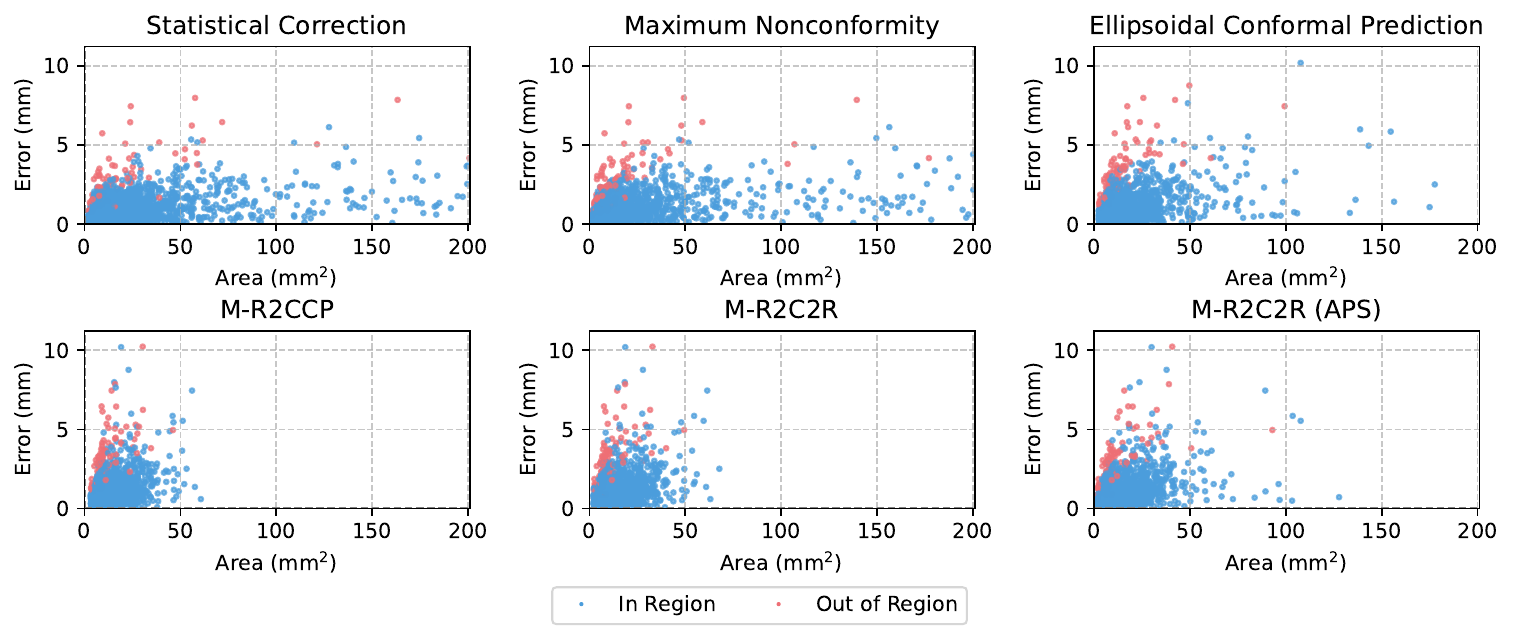}
    \caption{Scatter plot that illustrates the relationship between prediction error and efficiency (area) on the ISBI 2015 (2D) dataset.}
    \label{fig:isbi2015-adaptivity}
\end{figure}

\begin{figure}[htp]
    \centering
    \includegraphics[width=\textwidth]{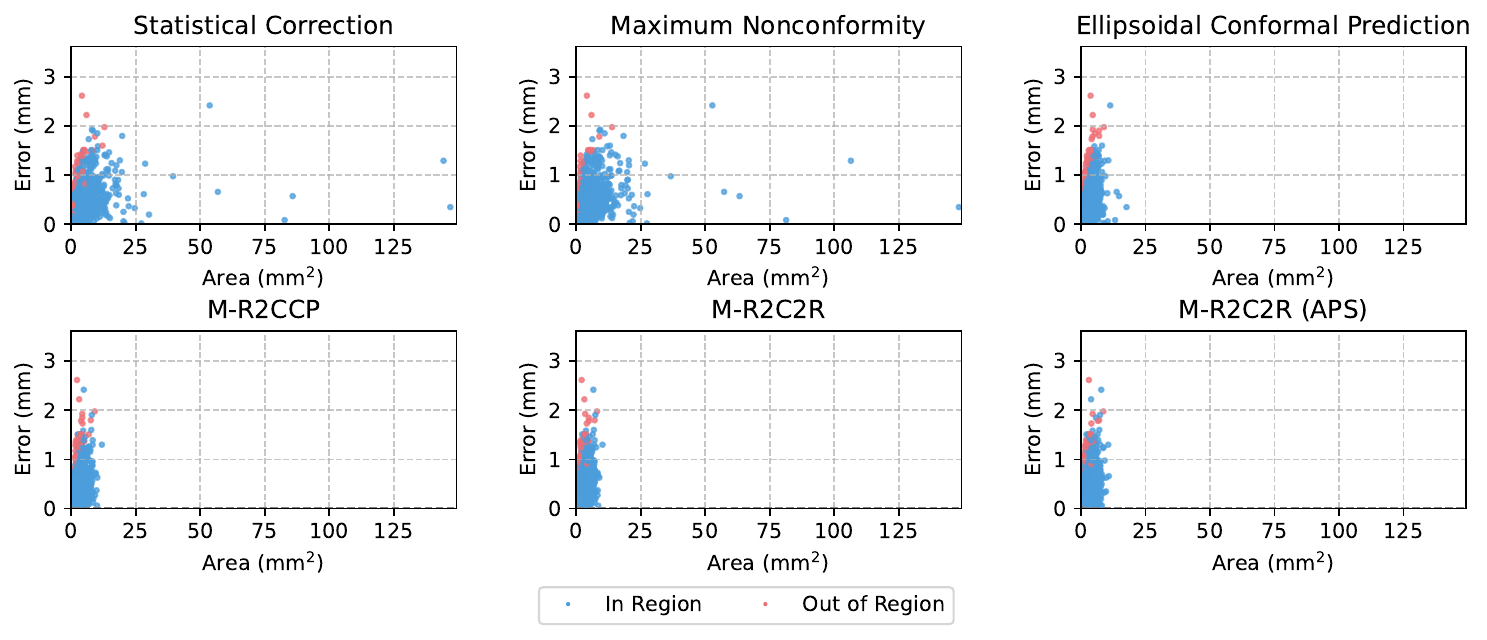}
    \caption{Scatter plot that illustrates the relationship between prediction error and efficiency (area) on the CHD (2D) dataset.}
    \label{fig:chd-adaptivity}
\end{figure}

\begin{figure}[htp]
    \centering
    \includegraphics[width=\textwidth]{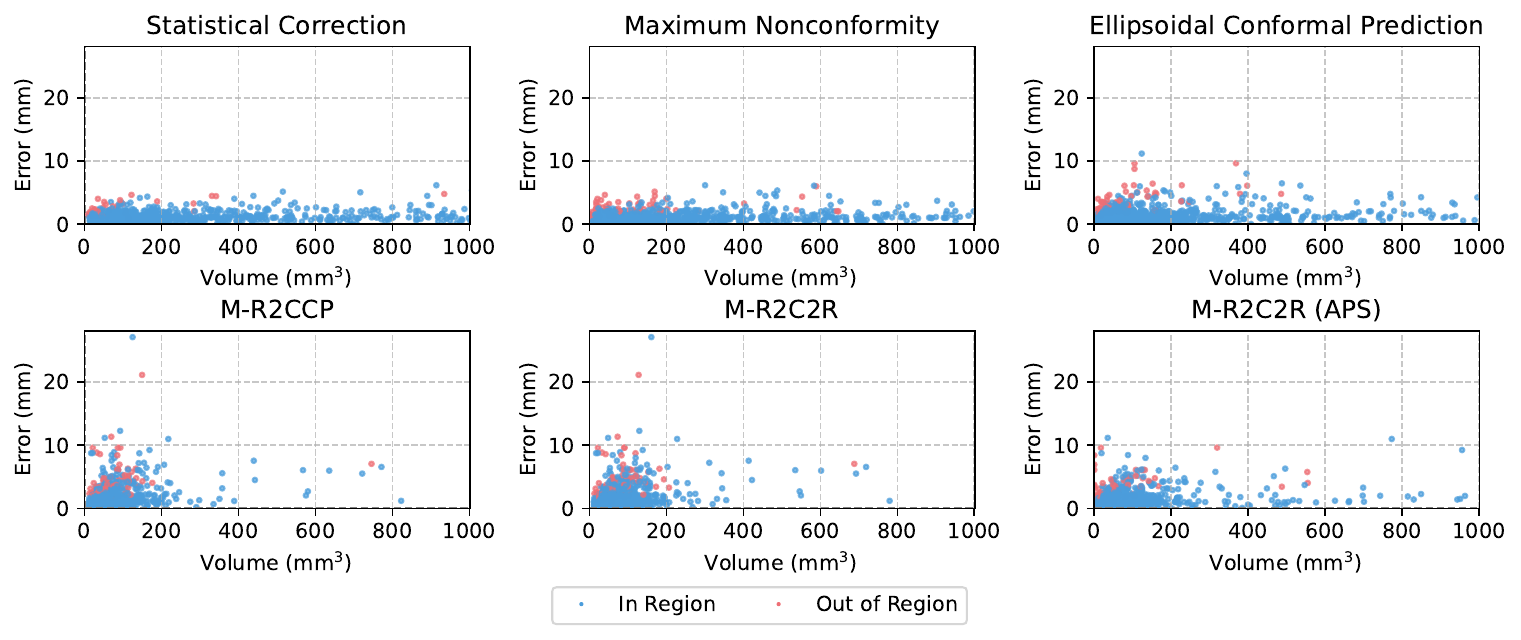}
    \caption{Scatter plot that illustrates the relationship between prediction error and efficiency (volume) on the MML (3D) dataset.}
    \label{fig:mml-adaptivity}
\end{figure}

\begin{figure}[htp]
    \centering
    \includegraphics[width=0.8\textwidth]{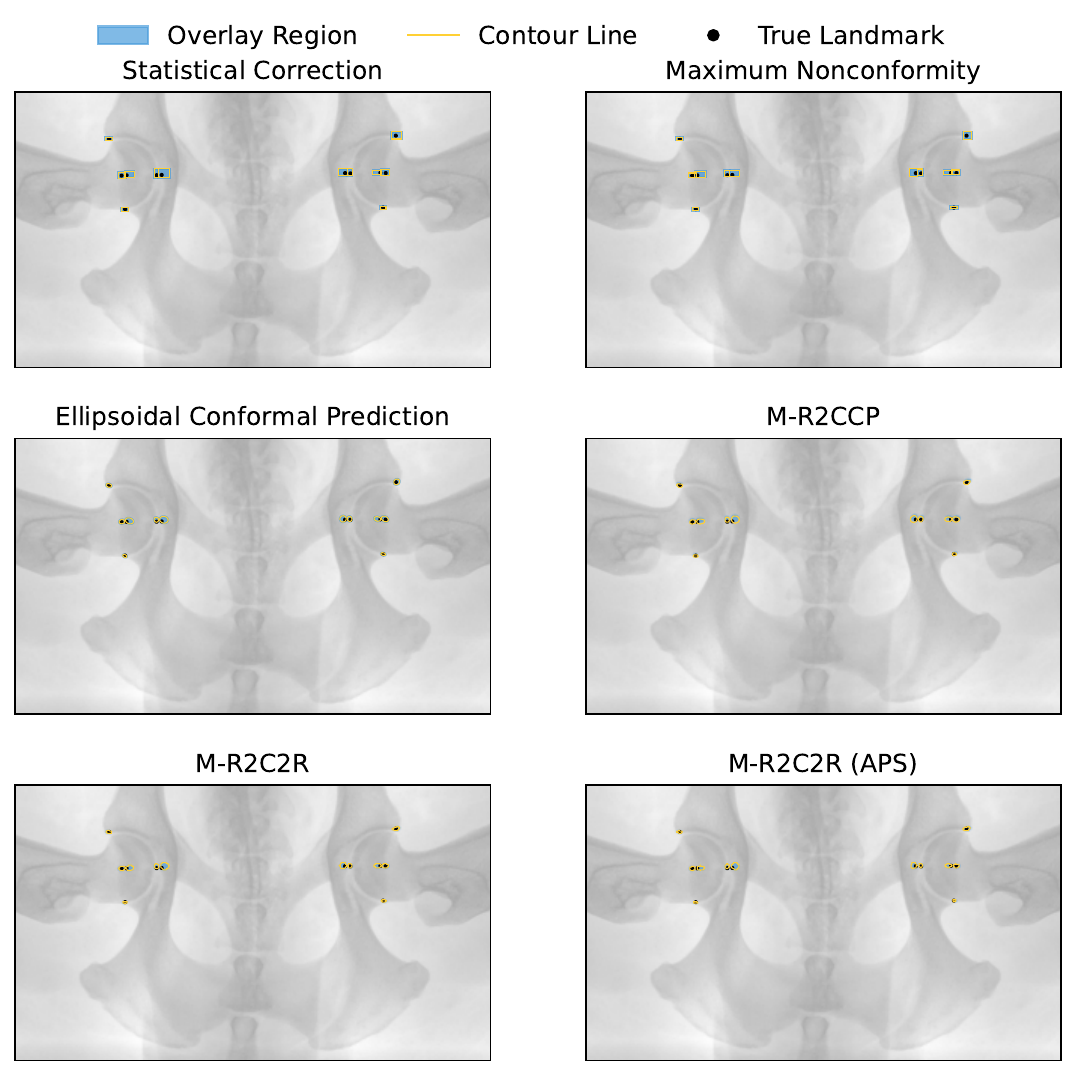}
    \caption{Canine pelvis example of conformal prediction regions.}
    \label{fig:canine-cp-example}
\end{figure}

\end{document}